\documentclass{article}

\usepackage[absolute,overlay]{textpos}
\usepackage{microtype}
\usepackage{graphicx}
\usepackage{subcaption}
\usepackage{multirow}
\usepackage{booktabs} 

\usepackage{hyperref}

\usepackage{pifont} 
\newcommand{\cmark}{\ding{51}}
\newcommand{\xmark}{\ding{55}}

\usepackage[accepted]{icml2026}

\usepackage{amsmath}
\usepackage{amssymb}
\usepackage{mathtools}
\usepackage{amsthm}
\usepackage{marvosym}
\usepackage[capitalize,noabbrev]{cleveref}

\theoremstyle{plain}
\newtheorem{theorem}{Theorem}[section]

\newtheorem{lemma}[theorem]{Lemma}

\theoremstyle{definition}

\theoremstyle{remark}

\usepackage[textsize=tiny]{todonotes}

\icmltitlerunning{PRISM: Parallel Residual Iterative Sequence Model}

\begin{document}

\begin{textblock*}{\paperwidth}(2.0cm, 1.5cm)
\includegraphics[height=1.3cm]{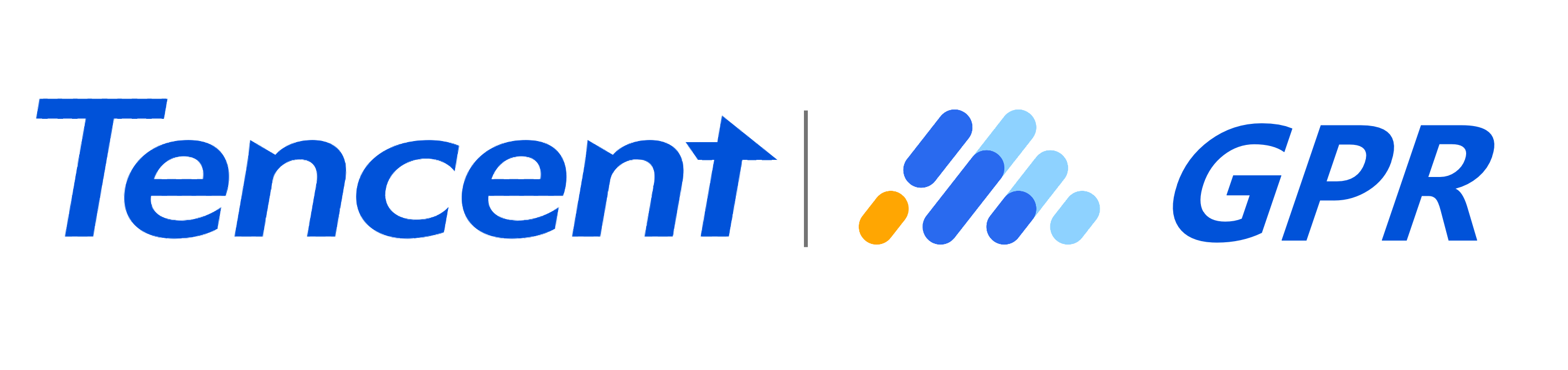} 
\end{textblock*}

\twocolumn[
  \icmltitle{PRISM: Parallel Residual Iterative Sequence Model}



  \icmlsetsymbol{equal}{*}
  \icmlsetsymbol{corresp}{\Letter}

  \begin{icmlauthorlist}
    \icmlauthor{Jie Jiang}{equal,comp}
    \icmlauthor{Ke Cheng}{equal,comp,xxx}
    \icmlauthor{Xin Xu}{equal,comp,yyy}
    \icmlauthor{Mengyang Pang}{equal,comp}
    \icmlauthor{Tianhao Lu}{equal,comp}
    \icmlauthor{Jiaheng Li}{yyy}
    \icmlauthor{Yue Liu}{comp}
    \icmlauthor{Yuan Wang}{comp}
    \icmlauthor{Jun Zhang}{corresp,comp}
    \icmlauthor{Huan Yu}{comp}
    \icmlauthor{Zhouchen Lin}{corresp,yyy}
  \end{icmlauthorlist}

  \icmlaffiliation{yyy}{State Key Laboratory of General Artificial Intelligence,
School of Intelligence Science and Technology, Peking University, Beijing, China}
  \icmlaffiliation{xxx}{State Key Laboratory of software development environment, School of Computer Science and Engineering, Beihang University, Beijing, China}
  \icmlaffiliation{comp}{Tencent AMS., Beijing, China}

  \icmlcorrespondingauthor{Jun Zhang}{neoxzhang@tencent.com}
  \icmlcorrespondingauthor{Zhouchen Lin}{zlin@pku.edu.cn}

  \icmlkeywords{Linear Attention}

  \vskip 0.3in
]



\printAffiliationsAndNotice{\icmlEqualContribution}

\begin{abstract}
Generative sequence modeling faces a fundamental tension between the expressivity of Transformers and the efficiency of linear sequence models. Existing efficient architectures are theoretically bounded by shallow, single-step linear updates, while powerful iterative methods like Test-Time Training (TTT) break hardware parallelism due to two dimensions of serial dependency: token-level state reliance and step-level iteration loops. We propose PRISM (Parallel Residual Iterative Sequence Model) to resolve this tension. PRISM explicitly approximates the expressive gate × residual × direction iteration pattern of TTT in a parallelizable form. We employ a Write-Forget Decoupling strategy that isolates non-linearity within the injection operator. To bypass the serial dependency of explicit solvers, PRISM utilizes a two-stage proxy architecture: a short-convolution anchors the initial residual using local history energy, while a learned predictor estimates the refinement updates directly from the input. This design distills structural patterns associated with iterative correction into a parallelizable feedforward operator. Theoretically, we prove that this formulation achieves Rank-$L$ accumulation, structurally expanding the update scheme beyond the single-step Rank-$1$ bottleneck. Empirically, it achieves comparable performance to explicit optimization methods while achieving \textbf{174x higher throughput}. Codes are available in \url{https://github.com/gpr-prism/prism/}.
\end{abstract}

\section{Introduction}

Generative sequential modeling (e.g., HSTU~\cite{zhai2024actions}) has revolutionized recommender systems by unlocking scaling laws for long-term user interests~\cite{deng2025onerec}. However, the Transformer's quadratic complexity makes modeling lifelong sequences prohibitively expensive~\cite{vaswani2017attention}. Current solutions often resort to lossy heuristics like sliding windows or static compression to reduce costs, inevitably severing critical dependencies. Consequently, developing an efficient architecture capable of handling ultra-long sequences without sacrificing fidelity remains a key challenge.

Recent work has established that sequence modeling can be unified under an \textit{Online Convex Programming} (OCP) framework~\cite{yang2024parallelizing, sun2024learning}: at each step $t$, the model solves a proximal optimization problem to update its memory state $\mathbf{S}_t$. Under this lens, efficient architectures such as Linear Attention~\cite{yang2024parallelizing, yang2024gated}, SSMs~\cite{gu2020hippo, gu2024mamba}, and the Delta Rule family~\cite{schlag2021linear, yang2024parallelizing} all correspond to \textit{linearized, one-shot} solutions to this objective. This linearization yields two desirable properties---closed-form updates and compatibility with parallel scan---but imposes a fundamental \textit{Rank-1 Bottleneck}: each token's contribution to memory is structurally confined to a single outer product $\mathbf{v}_t \mathbf{k}_t^\top$, limiting the model's capacity to capture high-order feature interactions or iteratively minimize reconstruction error~\cite{Lei2025ErrorFreeLinearAttention}.

Explicit optimization methods such as Test-Time Training (TTT)~\cite{zhang2025test, behrouz2025s} resolve the rank bottleneck by performing multi-step non-linear gradient descent on the \textit{same} OCP objective at runtime, achieving Rank-$L$ updates. However, this comes at the cost of the \textit{Serial Dependency Bottleneck}: since each gradient step depends on the updated state from the previous step, the optimization loop is inherently sequential, breaking hardware parallelism and negating the throughput advantages of efficient models~\cite{yang2023gated, behrouz2024titans}.

We observe that both families optimize the \textit{same} underlying objective---a proximal term plus a non-linear retrieval loss $\frac{1}{2}\|\mathbf{S} - \alpha_t \mathbf{S}_{t-1}\|_F^2 + \frac{\beta_t}{2} \|\phi(\mathbf{S} \mathbf{k}_t) - \mathbf{v}_t\|^2$---but differ in \textit{how} they solve it. One-shot methods linearize and accept Rank-1; TTT keeps the non-linearity but sacrifices parallelism. This motivates a natural question: \textit{can we achieve Rank-$L$ updates while preserving parallel scan compatibility?}

We answer affirmatively with \textbf{PRISM} (\textbf{P}arallel \textbf{R}esidual \textbf{I}terative \textbf{S}equence \textbf{M}odel). Our key insight is that while the exact gradient trajectory of the non-linear objective is state-dependent, its \textit{structure}---specifically, the gate-residual-direction decomposition that emerges from multi-step optimization---can be amortized into a learned static policy conditioned solely on the input. This allows us to emulate the functional benefits of deeper optimization (Rank-$L$ accumulation and non-linear shaping) as a parallelizable structural inductive bias.

To operationalize this, PRISM employs a \textbf{Write-Forget Decoupling} strategy: we maintain hardware-efficient linear decay for forgetting dynamics (identical to Gated DeltaNet) while isolating the expressivity gains within the injection operator. To bypass the serial bottleneck, we introduce \textbf{Input-Anchored Loop Unrolling}: a two-stage proxy mechanism that (1) anchors the initial residual via short-convolution capturing local history energy, and (2) applies $L$ learned direction-gate pairs $\{(\mathbf{k}^{(l)}, \mathbf{p}^{(l)})\}_{l=1}^L$ to produce a Rank-$L$ update in a single fused pass. The entire refinement loop collapses into a closed-form parallel operator.

Our main contributions are:
\begin{itemize}
    \item \textbf{Unified OCP Perspective with Rank-Parallel Taxonomy.} We position existing methods within a common optimization framework (Table~\ref{tab:ocp}) and identify the Rank-1/Parallel trade-off as the central bottleneck. This clarifies that PRISM solves the \textit{same} objective as TTT and Gated DeltaNet, but is the first to achieve both Rank-$L$ and parallel scan compatibility.
    
    \item \textbf{PRISM Architecture.} We propose a hardware-aware implementation via Write-Forget Decoupling and Input-Anchored Loop Unrolling. The two-stage proxy mechanism circumvents the inherent serial computation bottleneck of iterative solvers while preserving their expressive power. Empirically, PRISM attains performance on par with TTT while achieving throughput improvements of up to \textbf{174$\times$}.
    
    \item \textbf{Theoretical Expressivity Guarantee.} We formally prove that the input-anchored formulation yields \textit{Rank Accumulation}: PRISM's injection operator produces Rank-$L$ state modifications per token, asymptotically approaching the solution quality of the ideal non-linear delta rule. 
\end{itemize}

\section{Related Work}
\label{sec:related_work}

We survey efficient sequence modeling through the lens of \textbf{Online Convex Programming} (OCP)~\cite{yang2024parallelizing, sun2024learning}, which provides a unified optimization-theoretic framework for understanding linear recurrences. Under this framework, all methods can be characterized by (1) how they formulate the per-step objective, and (2) how they solve it---yielding a natural taxonomy along two axes: \textbf{Rank} (expressivity of the per-token update) and \textbf{Parallel Scan compatibility} (hardware efficiency). A comparison is provided in Table~\ref{tab:ocp}.

\begin{table*}[t]
\centering
\caption{\textbf{Linear recurrences as online learning.} All methods optimize a proximal objective with varying fidelity. The \textit{Rank} column indicates the rank of the per-token state modification. PRISM is the first to achieve both Rank-$L$ and parallel scan compatibility by amortizing the multi-step solution via input-anchored proxies. Table structure follows~\citet{Tumma2026PreconditionedDeltaNet}.}
\label{tab:ocp}
\resizebox{0.98\textwidth}{!}{
\begin{tabular}{lccc}
\toprule
\textbf{Method} & \textbf{Online Learning Objective} & \textbf{Rank} & \textbf{Par.\ Scan} \\
\midrule
LA~\cite{katharopoulos2020transformers} & $\frac{1}{2}\|\mathbf{S} - \mathbf{S}_{t-1}\|_F^2 - \langle \mathbf{S} \mathbf{k}_t,\, \mathbf{v}_t \rangle$ & 1 & \cmark \\
Mamba-2~\cite{dao2024transformers} & $\frac{1}{2}\|\mathbf{S} - \alpha_t \mathbf{S}_{t-1}\|_F^2 - \langle \mathbf{S} \mathbf{k}_t,\, \mathbf{v}_t \rangle$ & 1 & \cmark \\
DeltaNet~\cite{yang2024parallelizing} & $\frac{1}{2}\|\mathbf{S} - \mathbf{S}_{t-1}\|_F^2 - \langle \mathbf{S} \mathbf{k}_t,\, \beta_t(\mathbf{v}_t - \mathbf{S}_{t-1} \mathbf{k}_t) \rangle$ & 1 & \cmark \\
Gated DeltaNet~\cite{yang2024gated} & $\frac{1}{2}\|\mathbf{S} - \alpha_t \mathbf{S}_{t-1}\|_F^2 - \langle \mathbf{S} \mathbf{k}_t,\, \beta_t(\mathbf{v}_t - \alpha_t\mathbf{S}_{t-1} \mathbf{k}_t) \rangle$ & 1 & \cmark \\
PGDN~\cite{Tumma2026PreconditionedDeltaNet} & $\frac{1}{2}\|\mathbf{S} - \alpha_t \mathbf{S}_{t-1}\|_{\mathbf{P}^{-1}}^2 - \langle \mathbf{S} \mathbf{k}_t,\, \beta_t(\mathbf{v}_t - \alpha_t\mathbf{S}_{t-1} \mathbf{k}_t) \rangle$ & 1 & \cmark \\
\midrule
TTT~\cite{sun2024learning} & $\frac{1}{2}\|\mathbf{S} - \alpha_t \mathbf{S}_{t-1}\|_F^2 + \frac{\beta_t}{2} \|\phi(\mathbf{S} \mathbf{k}_t) - \mathbf{v}_t\|^2$ & up to $L$ & \xmark \\
\textbf{PRISM (Ours)} & $\frac{1}{2}\|\mathbf{S} - \alpha_t \mathbf{S}_{t-1}\|_F^2 + \frac{\beta_t}{2} \|\phi(\mathbf{S} \mathbf{k}_t) - \mathbf{v}_t\|^2$ & up to $L$ & \cmark \\
\bottomrule
\end{tabular}
}
\end{table*}

\subsection{One-Shot Linear Solutions (Rank-1, Parallel)}

The dominant approach for efficient sequence modeling constrains the state update to a single-step, closed-form solution of a linearized proximal objective. Under the OCP framework, this family arises from dropping the non-linear activation $\sigma$ and solving in one step.

\textbf{Additive and Gated Updates.}
Linear Attention~\cite{katharopoulos2020transformers} and RWKV~\cite{peng2025rwkv} employ a pure Hebbian accumulation $\mathbf{S}_t = \mathbf{S}_{t-1} + \mathbf{v}_t \mathbf{k}_t^\top$. Mamba~\cite{gu2024mamba} and Mamba-2~\cite{dao2024transformers} introduce input-dependent gated decay $\alpha_t$, improving context filtering but retaining a Rank-1 injection. GLA~\cite{yang2023gated} and GSA~\cite{zhang2024gated} further enhance the gating mechanism.

\textbf{Delta Rule Family.}
DeltaNet~\cite{yang2024parallelizing} formulates the update as a one-step gradient on a linearized retrieval loss, yielding the error-correcting form $\mathbf{S}_t = \mathbf{S}_{t-1}\cdot(\mathbf{I} - \beta_t \mathbf{k}_t \mathbf{k}_t^\top) + \beta_t \mathbf{v}_t \mathbf{k}_t^\top$. Gated DeltaNet~\cite{yang2024gated} combines this with scalar decay $\alpha_t$. PGDN~\cite{Tumma2026PreconditionedDeltaNet} further improves convergence by changing the norm of the proximal term via a preconditioner $\mathbf{P}$, rotating the key direction to $\tilde{\mathbf{k}}_t = \mathbf{P}\mathbf{k}_t$. All members of this family are structurally Rank-1 and fully compatible with parallel scan, as the linearization preserves the matrix semi-ring structure required for associative prefix operations.


\subsection{Multi-Step Non-Linear Solutions (Rank-$L$, Serial)}

\textbf{Test-Time Training (TTT).}
TTT~\cite{sun2024learning, zhang2025test} resolves the Rank-1 bottleneck by performing $L$ steps of gradient descent on the \textit{full} non-linear objective $\frac{1}{2}\|\phi(\mathbf{S}_t \mathbf{k}_t) - \mathbf{v}_t\|^2$ at runtime. Each step produces an independent gradient direction, yielding up to Rank-$L$ state modifications. TTT-MLP further uses an MLP state to implicitly provide multi-direction via its hidden layer parameterized by $\mathbf{W}_1$. However, because each gradient step $\nabla_{\mathbf{S}} \mathcal{L}$ depends on the current state $\mathbf{S}^{(l)}$, the optimization loop is inherently sequential---each step must wait for the previous step's state update. This \textbf{Serial Dependency Bottleneck} precludes parallel scan and forces $O(TL)$ sequential computation during training, where $T$ corresponding to the length of the sequence, making TTT up to 174$\times$ slower than linear recurrences at equivalent scale.

\textbf{Titans}~\cite{behrouz2024titans} extends TTT with a momentum-based memory and surprise-gated updates, but fundamentally shares the same serial constraint as TTT, because the optimization loop remains state-dependent.

\subsection{Positioning PRISM: Amortized Multi-Step Solution (Rank-$L$, Parallel)}

PRISM resolves the tension between expressivity and efficiency by targeting the \textit{same} non-linear objective as TTT---$\frac{1}{2}\|\mathbf{S} - \alpha_t \mathbf{S}_{t-1}\|_F^2 + \frac{\beta_t}{2} \|\phi(\mathbf{S} \mathbf{k}_t) - \mathbf{v}_t\|^2$---while \textit{amortizing} the multi-step solution to eliminate serial dependencies.

The key observation is that TTT's seriality arises from two sources: (1) the residual $\mathbf{r}^{(l)} = \mathbf{v}_t - \phi(\mathbf{S}^{(l)} \mathbf{k}_t)$ depends on the evolving state, and (2) the gradient direction changes with $\mathbf W_1^{(l)}$ after each weight update. Concretely, PRISM removes both dependencies as follows:
\begin{itemize}
    \item \textbf{Residual}: approximated by a local \textit{input-anchored} proxy computed from a short convolution over the input sequence, removing the dependency on $\mathbf{S}_{t-1}$.
    \item \textbf{Direction}: replaced by $L$ \textit{learned projections} precomputed from the same local proxy, eliminating inter-step dependency.
\end{itemize}

This yields a Rank-$L$ injection $\mathbf{B}_t = \sum_{l=1}^{L} \beta_t^{(l)} \boldsymbol{\delta}_t^{(l)} {\mathbf{k}_t^{(l)}}^\top$ that is fully parallelizable while structurally emulating the multi-step refinement trajectory.

\section{Theoretical Motivation}
\label{sec:preliminary}

In this section, we establish a unified optimization-theoretic view of sequence modeling, identify the fundamental Rank-Parallel trade-off that governs existing architectures, and motivate PRISM as a resolution.

\subsection{Sequence Modeling as Online Convex Programming}

Notation. Bold uppercase letters denote matrices or sequence tensors, bold lowercase letters denote vectors, and plain letters denote scalars, indices, or dimensions. We set \(d_k=d_v=d\) in the configurations used in this paper. At time step \(t\), let \(\mathbf x_t\) denote the input representation, \(\mathbf S_t\in\mathbb R^{d_v\times d_k}\) the recurrent state, \(\mathbf k_t,\mathbf q_t\in\mathbb R^{d_k}\) the key and read query, and \(\mathbf v_t\in\mathbb R^{d_v}\) the value. We use the right-multiplication convention \(\mathbf S_t=\mathbf S_{t-1}\mathbf A_t+\mathbf B_t\), where \(\mathbf A_t\in\mathbb R^{d_k\times d_k}\) and \(\mathbf B_t\in\mathbb R^{d_v\times d_k}\).

\paragraph{The Unified Objective.} Following~\cite{yang2024parallelizing, sun2024learning}, we view state updates in linear recurrences as solutions to a per-step online optimization problem. At each timestep $t$, the model seeks a state $\mathbf{S}_t \in \mathbb{R}^{d \times d}$ that balances two competing goals: (1) \textit{stability}---the new state should not deviate too far from the previous one, and (2) \textit{fidelity}---the state should accurately retrieve the current value $\mathbf{v}_t$ when queried with key $\mathbf{k}_t$. This is formalized as:
\begin{equation}
\label{eq:ocp_full}
\mathcal{L}_t(\mathbf S)=
\frac{1}{2}\left\|\mathbf S-\alpha_t\mathbf S_{t-1}\right\|_F^2
+
\frac{\beta_t}{2}\left\|\phi(\mathbf S\mathbf k_t)-\mathbf v_t\right\|^2,
\end{equation}
where \(\alpha_t\in(0,1]\) is a scalar decay gate, \(\beta_t>0\) weights retrieval fidelity, and \(\phi\) is an element-wise activation. Similar objective encompasses the entire spectrum of existing methods (see Table~\ref{tab:ocp}).

\paragraph{One-Shot Linear Solution $\Rightarrow$ Rank-1.} For the linearized case ($\phi(x)=x$), one delta update gives
\begin{equation}
\mathbf S_t
=
\mathbf{S}_{t-1} \cdot \alpha_t
\left(\mathbf I-\beta_t^{(1)}\mathbf k_t^{(1)}\mathbf k_t^{(1)\top}\right)
+
\beta_t^{(1)}\mathbf v_t\mathbf k_t^{(1)\top},
\end{equation}
which is exactly the Gated DeltaNet update~\cite{yang2024gated}. The identity matrix \(\mathbf I\) has shape \(d_k\times d_k\). The additive write \(\beta_t^{(1)}\mathbf v_t\mathbf k_t^{(1)\top}\) is rank 1 in each head. This structure preserves the matrix semi-ring form required for parallel prefix scans, enabling $O(\log N)$ parallel computation.

\paragraph{The Rank-1 Bottleneck.} While efficient, the Rank-1 constraint means that a single token can only ``write'' information along one key direction per step. To capture multi-faceted associations (e.g., a token that is simultaneously relevant to multiple future queries), the model must rely entirely on depth---stacking multiple layers---rather than within-step refinement. This fundamentally limits per-layer expressivity.

\subsection{The Non-Linear Solver}

The linearized update in Eq.~(2) obtains a scan-compatible write by
replacing the nonlinear response with an identity mapping. To recover
response-dependent adaptation, we retain the nonlinear $\phi$ in
Eq.~\ref{eq:ocp_full} and consider iterative optimization, we formalize this degeneracy under a linear readout in Appendix C. Initializing \(\mathbf S_t^{(0)}=\mathbf{S}_{t-1} \cdot \alpha_t\),
a single gradient step yields the \textit{Non-Linear Delta Rule}:
\begin{equation}
\label{eq:ideal_solver}
\mathbf S_t^{(1)}
=
\mathbf S_t^{(0)}
+
\beta^{(1)}_t
\left[
\phi'(\mathbf S_t^{(0)}\mathbf k_t)
\odot
\left(\mathbf v_t-\phi(\mathbf S_t^{(0)}\mathbf k_t)\right)
\right]
\mathbf k_t^\top.
\end{equation}
Let the raw residual $\mathbf r^{(1)}_t = \mathbf v_t - \phi(\mathbf S^{(0)}_t \mathbf k_t),$
the gain-modulated correction $\boldsymbol \delta^{(1)}_t = \phi'(\mathbf S^{(0)}_t \mathbf k_t) \odot \mathbf r^{(1)}_t,$
and $\mathbf k^{(1)}_t = \mathbf k_t$, where  $\phi'(\cdot)$ denotes the element-wise derivative. Equation ~\eqref{eq:ideal_solver} becomes
$\Delta \mathbf S_t = \beta^{(1)}_t \boldsymbol \delta^{(1)}_t (\mathbf k^{(1)}_t)^\top$, where $\beta^{(1)}_t$ is a scalar step coefficient
and $\boldsymbol \delta^{(1)}_t$ is the gain-modulated residual correction.
This update is \textit{adaptive}: in saturated memory regions where $\phi'$ is small, the correction is automatically suppressed; in sensitive regions where the memory response lies in the linear range of $\phi$, the correction is larger. However, a single step still produces only a Rank-1 additive update along $\mathbf k_t^{(1)}$. 

\paragraph{Multi-Step $\Rightarrow$ Rank-$L$.} For a linear state with a fixed key, repeatedly applying Eq.~\eqref{eq:ideal_solver} retains $\mathbf k_t^{(l)}=\mathbf k_t$, so all updates share the same right-hand direction and their sum remains Rank-1. Nonlinear memories such as TTT-MLP instead update their internal parameters across steps, allowing the effective direction $\mathbf k_t^{(l)}$ to change with $l$. At the level of a matrix-valued memory update, the accumulated multi-step correction can therefore be written as
\begin{equation} 
\label{eq:multi_step}
\Delta\mathbf S_t^{[L]} = \sum_{l=1}^{L} \beta_t^{(l)} \boldsymbol\delta_t^{(l)} \mathbf k_t^{(l)\top},
\end{equation} 
where $\beta_t^{(l)}$ is the step coefficient, $\boldsymbol\delta_t^{(l)}$ is the gain-modulated residual correction, and $\mathbf k_t^{(l)}$ is the effective write direction at step $l$. When the corresponding correction vectors and directions are non-degenerate, the accumulated update can attain Rank-$L$. This motivates PRISM: rather than explicitly executing the state-dependent iterations, the following section uses learned input-conditioned quantities to retain the same multi-direction residual-refinement structure. Appendix~\ref{app:ttt_to_prism} provides the detailed structural derivation.

\subsection{The Rank-Parallel Trade-off}
\label{sec:rank_parallel}

Modern hardware-efficient training relies on \emph{parallel prefix scans} that require the recurrence to satisfy a \textit{state-independence} condition:

\textbf{Proposition 3.1} (\textbf{Parallel Scan Compatibility})\label{def:parallelism}
    A recurrence $\mathbf{S}_t = \mathbf{S}_{t-1} \mathbf{A}_t + \mathbf{B}_t$ admits parallel prefix scanning if the operators $\mathbf{A}_t$ and $\mathbf{B}_t$ are independent of the hidden state $\mathbf{S}_{t-1}$.

The non-linear multi-step solver (Eq.~\ref{eq:multi_step}) fundamentally violates this condition: both the contextual gain $\beta_t^{(l)}$ and the residual $\boldsymbol\delta_t^{(l)}$ depend recursively on the hidden state. This creates a serial dependency chain with $O(TL)$ latency, precluding parallel computation.

\paragraph{The Central Question.} This reveals a fundamental trade-off in existing methods (Table~\ref{tab:ocp}):
\begin{itemize}
    \item \textbf{Linearize} the objective $\Rightarrow$ Rank-1 but parallel (Gated DeltaNet, PGDN).
    \item \textbf{Keep nonlinearity} and solve iteratively $\Rightarrow$ Rank up to $L$ but serial (TTT).
\end{itemize}

Can we achieve \textbf{Rank-$L$ updates while preserving parallel scan compatibility}? In the next section, we show that this is possible through \textit{amortization}: replacing the state-dependent terms with learned input-anchored proxies that emulate the multi-step trajectory without requiring sequential state access.

\begin{figure*}[t]
    \centering
    \includegraphics[width=0.8\linewidth]{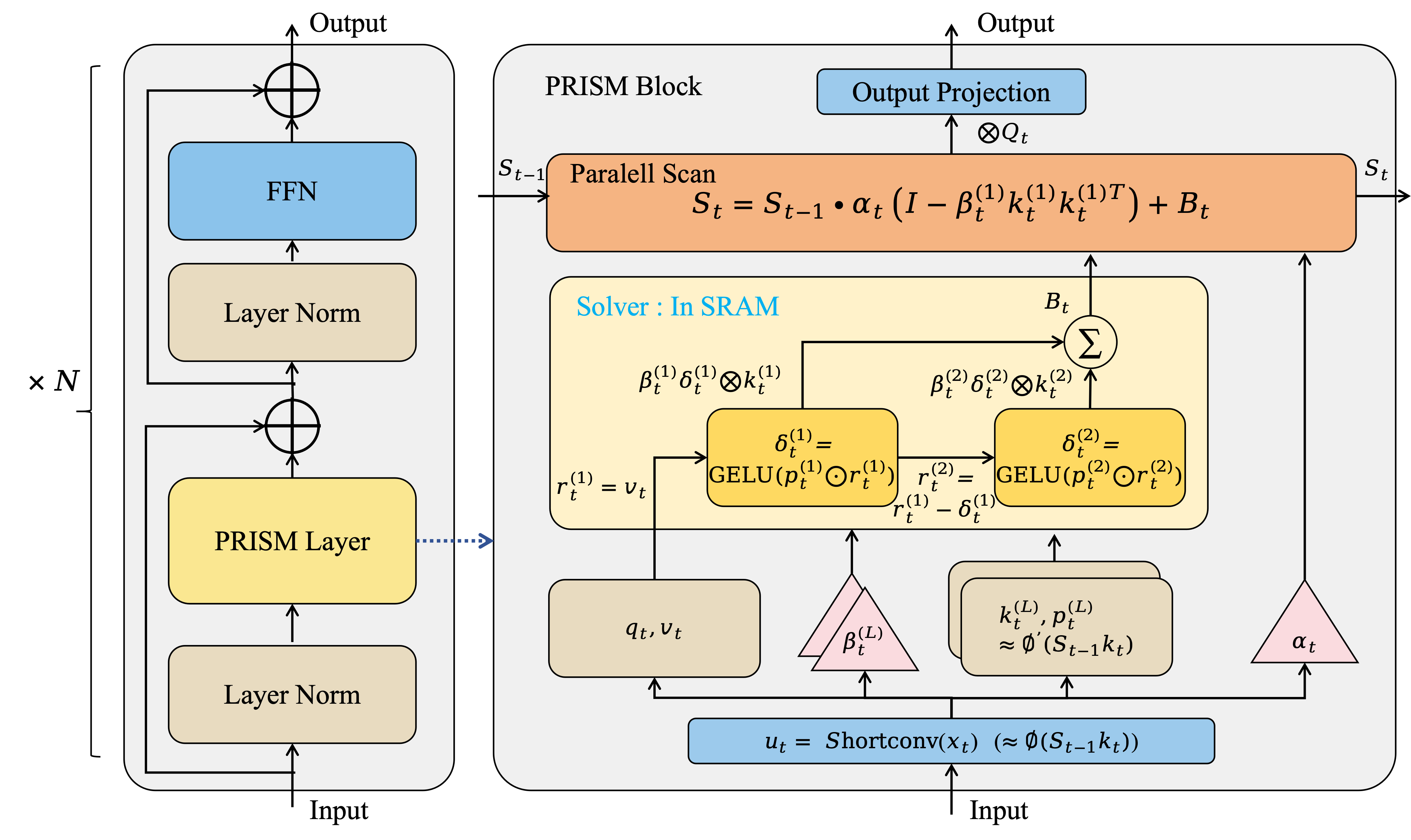}
    \caption{\textbf{The PRISM Architecture.} 
    The framework resolves the Rank-Parallel trade-off through two mechanisms.
    \textbf{Phase 1 (Input-Anchored Proxy):} A ShortConv anchor computes a state-free proxy $\mathbf{u}_t \approx \mathbf{S}_{t-1}\mathbf{k}_t$. Learned predictors generate \textit{Contextual Gain} vectors $\textbf{p}^{(l)}$ (approximating $\phi'(\mathbf{S}_{t-1}\mathbf{k}_t)$) and \textit{Basis Directions} $\mathbf{k}^{(l)}$ (approximating the multi-step gradient directions)---all from inputs alone.
    \textbf{Phase 2 (Rank-$L$ Accumulation):} An unrolled residual loop constructs the high-rank injection $\mathbf{B}_t = \sum_{l=1}^L \beta_t^{(l)} \boldsymbol{\delta}_t^{(l)} {\mathbf{k}_t^{(l)}}^\top$. Each step performs greedy residual subtraction, expanding the update from Rank-1 to Rank-$L$.
    \textbf{State Update:} The accumulated Rank-$L$ injection is applied to a \textit{decoupled linear recurrence} whose forgetting operator $\mathbf{A}_t$ is state-independent, preserving parallel scan compatibility.}
    \label{fig:PRISM_arch}
\end{figure*}

\section{Method: The PRISM Architecture}
\label{sec:method}

We now present PRISM, which resolves the Rank-Parallel trade-off identified in \S\ref{sec:rank_parallel}. The key idea is to \textit{amortize} the multi-step solution of the ideal non-linear objective (Eq.~\ref{eq:ocp_full}) into a single parallel pass, by replacing state-dependent terms with learned input-anchored proxies.

\subsection{Write-Forget Decoupling}
\label{sec:decouple}

Recall that any linear recurrence decomposes as $\mathbf{S}_t = \mathbf{S}_{t-1}\mathbf{A}_t + \mathbf{B}_t$, where $\mathbf{A}_t$ governs \textit{forgetting} (which old information to decay) and $\mathbf{B}_t$ governs \textit{writing} (what new information to inject). We ask: \emph{which component benefits most from high-rank, non-linear expressivity?}

In Appendix~\ref{app:stability}, we perform a spectral perturbation analysis and find a fundamental asymmetry:
\begin{itemize}
    \item \textbf{Forgetting is robust.} Approximation errors in the multiplicative operator $\mathbf{A}_t$ accumulate sub-linearly---$O(\ln T)$ worst-case, $O(1)$ on average (Theorems~\ref{thm:log_stability} \&~\ref{thm:avg_error}). This means $\mathbf{A}_t$ can be safely approximated by a structured Rank-1 form without degrading long-term memory.
    \item \textbf{Writing is sensitive.} Additive errors in $\mathbf{B}_t$ accumulate linearly $O(T)$ in persistent memory channels. The system is hypersensitive to the rank-fidelity of the injection operator.
\end{itemize}

Based on this dichotomy, we \textbf{decouple} the two paths: the forgetting operator is fixed to the efficient Gated DeltaNet form (right-multiplied as $\mathbf{S}_{t-1} \cdot \alpha_t(\mathbf{I} - \beta_t^{(1)} \mathbf{k}_t^{(1)} {\mathbf{k}_t^{(1)}}^\top)$), while the writing operator $\mathbf{B}_t$ is constructed as a Rank-$L$ injection via an iterative solver---all computed in parallel.

\subsection{Input-Anchored Loop Unrolling}
\label{sec:anchor}

The non-linear multi-step solver (Eq.~\ref{eq:multi_step}) requires two state-dependent quantities at each step $l$: the contextual gain $\beta_t^{(l)}$ and the residual $\boldsymbol\delta_t^{(l)}$. Both depend on the hidden state $\mathbf{S}_{t-1}$, creating serial dependencies. We bypass this via a two-stage proxy mechanism.

\paragraph{Stage 1: Anchor Construction.}
We construct a causal input anchor using a short convolution:
\begin{equation}
\mathbf u_t=\operatorname{ShortConv}(\mathbf x_{t}),
\qquad \mathbf u_t\in\mathbb R^{d_u},
\end{equation}
where $\operatorname{ShortConv}(\cdot)$ denotes a trainable convolutional layer with a window size of 5 that aggregates recent history. This anchor approximates $\mathbf{S}_{t-1}\mathbf{k}_t$ and serves as a local causal feature used to predict the refinement components. In Appendix~\ref{app:simulation}, we demonstrate that, under fading-memory dynamics, the approximation error decreases exponentially with the effective memory window.

\paragraph{Stage 2: Parallel Refinement Loop.}
With the anchor $\mathbf{u}_t$ established, we unroll $L$ refinement steps---each producing an independent rank-1 component---without any inter-step state dependency.

\textbf{Initialization.} The residual is seeded by comparing the target against the anchor $\mathbf{r}_t^{(1)} = \mathbf{v}_t - \mathbf{u}_t$.

\textbf{Per-step computation.} For each step $l = 1, \ldots, L$, we compute three quantities from the anchor alone:
\begin{align}
\label{equ:kpbeta}
    \mathbf{k}_t^{(l)} &= \mathbf{W}_k^{(l)} \mathbf{u}_t  & &\text{(Direction: basis for step-}l\text{ update)}, \\
    \mathbf{p}_t^{(l)} &= \mathbf{W}_p^{(l)} \mathbf{u}_t & &\text{(Gain: approximates } \phi'(\mathbf{S}^{(l-1)}\mathbf{k}_t)\text{)}, \\
    \beta_t^{(l)} &= \mathbf{w}_\beta^{(l)\top} \mathbf{u}_t & &\text{(Confidence: step-size gate)},
\end{align}
where $\mathbf{W}_k^{(l)}, \mathbf{W}_p^{(l)} \in\mathbb R^{d\times d}$ and $\mathbf{w}_\beta^{(l)} \in\mathbb R^{d\times 1}$ are trainable parameters, producing the direction $\mathbf k^{(l)}_t$, the gain vector $\mathbf p^{(l)}_t \in \mathbb{R}^d
(\approx \phi')$, and the confidence scalar $\beta^{(l)}_t$, respectively.
The update direction is formed by modulating the current residual with the simulated gain:
\begin{align}
    \boldsymbol{\delta}_t^{(l)} &= \mathrm{GELU}\!\left(\mathbf{p}_t^{(l)} \odot \mathbf{r}_t^{(l)}\right), \label{eq:delta}\\
    \mathbf{r}_t^{(l+1)} &= \mathbf{r}_t^{(l)} - \boldsymbol{\delta}_t^{(l)}. \label{eq:residual_sub}
\end{align}

The greedy subtraction in Eq.~\ref{eq:residual_sub} encourages subsequent steps to capture \textit{complementary} error signals, enriching the information content of the total injection. The GELU activation in Eq.~\ref{eq:delta} provides the non-linear shaping that distinguishes PRISM from purely linear multi-rank approaches.

\subsection{Rank-$L$ State Update}
\label{sec:update}

The $L$ rank-1 components are accumulated into the injection matrix:
\begin{equation}
    \label{eq:B_matrix}
    \mathbf{B}_t = \sum_{l=1}^{L} \beta_t^{(l)} \cdot \boldsymbol{\delta}_t^{(l)} {\mathbf{k}_t^{(l)}}^\top.
\end{equation}
Combined with the decoupled forgetting operator, the full PRISM state update is:
\begin{equation}
    \label{eq:prism_update}
    \;\mathbf{S}_t = \mathbf{S}_{t-1} \cdot \alpha_t\!\left(\mathbf{I} - \beta_t^{(1)} \mathbf{k}_t^{(1)} {\mathbf{k}_t^{(1)}}^\top\right) + \sum_{l=1}^{L} \beta_t^{(l)} \cdot \boldsymbol{\delta}_t^{(l)} {\mathbf{k}_t^{(l)}}^\top.\;
\end{equation}
\paragraph{Parallel Scan Compatibility.} In the form $\mathbf{S}_t = \mathbf{S}_{t-1}\mathbf{A}_t + \mathbf{B}_t$, both the transition matrix $\mathbf{A}_t = \alpha_t(\mathbf{I} - \beta_t^{(1)} \mathbf{k}_t^{(1)} {\mathbf{k}_t^{(1)}}^\top)$ and the injection $\mathbf{B}_t = \sum_l \beta_t^{(l)} \boldsymbol{\delta}_t^{(l)} {\mathbf{k}_t^{(l)}}^\top$ are computed entirely from the input sequence (via the anchor $\mathbf{u}_t$) with no dependence on $\mathbf{S}_{t-1}$. Eq.~\ref{eq:prism_update} satisfies the conditions in Proposition~3.1 thus admits parallel prefix scanning.



\paragraph{Computational Complexity.} The refinement loop adds $O(Ld)$ computation per token (for $L$ linear projections of the $d$-dimensional anchor), which is negligible compared to the $O(d^2)$ cost of the state multiplication. In practice, $L \in \{2, 3, 4\}$ adds $<5\%$ overhead relative to Gated DeltaNet, while delivering substantial quality gains.

\section{Experiments}
\label{sec:experiments}

Our experiments validate PRISM's resolution of the Rank-Parallel trade-off along three axes:
\begin{itemize}
    \item \textbf{RQ1 (Rank-$L$ Effectiveness):} Does PRISM's amortized Rank-$L$ injection match or exceed the modeling fidelity of Rank-1 baselines and explicit iterative solvers, while maintaining linear-time efficiency?
    \item \textbf{RQ2 (Parallel Scan Efficiency):} Does the Input-Anchored design preserve hardware-efficient parallelism as sequence length scales, achieving throughput comparable to Rank-1 models and orders of magnitude faster than serial solvers?
    \item \textbf{RQ3 (Mechanism Validation):} Do the individual components---iterative depth, non-linear activation, input anchor, and gain predictor---each contribute as theorized by our Write-Forget Decoupling and Rank Accumulation framework?
\end{itemize}

\subsection{Experimental Setup}

\textbf{Datasets.} We evaluate on four widely-used recommendation benchmarks: \textit{Amazon Books}, \textit{Amazon Movies}, \textit{Amazon Electronics}, and \textit{Yelp} (statistics in Appendix~\ref{app:recdataset}). Sequential recommendation serves as a particularly stringent stress test for the Rank-1 bottleneck: user interests are inherently multi-modal, requiring the memory state to simultaneously encode multiple latent preference dimensions per interaction---precisely the scenario where Rank-1 updates are structurally insufficient.

\textbf{Baselines.} We organize baselines according to the Rank-Parallel taxonomy (Table~\ref{tab:ocp}):
\begin{enumerate}
    \item \textbf{Rank-1, Parallel (One-Shot Linear):}
    SLA~\cite{katharopoulos2020transformers}, GLA~\cite{yang2023gated}, GSA~\cite{zhang2024gated}, MoM~\cite{du2025mom}, Mamba-2~\cite{dao2024transformers}, Gated DeltaNet~\cite{yang2024gated}
    \item \textbf{Rank-$L$, Serial (Multi-Step Explicit):} TTT~\cite{sun2024learning}, Titans~\cite{behrouz2024titans}, ATLAS~\cite{behrouz2025atlas}
    \item \textbf{Full-Rank, Quadratic (Transformer):} SASRec~\cite{kang2018self}, HSTU~\cite{zhai2024actions}---upper bound at $O(N^2)$ cost.
\end{enumerate}

PRISM occupies the previously empty cell: \textbf{Rank-$L$, Parallel}. Implementation details are in Appendix~\ref{app:baselines}.

\textbf{Metrics.} We report Hit@$K$ and NDCG@$K$ ($K \in \{100, 200, 500\}$) with per-dataset AUC for recommendation quality. For efficiency (RQ2), we measure training throughput (thousand tokens/s) on a single NVIDIA H20 GPU under identical batch configurations. All recurrent models use the materialization backend from Flash-Linear-Attention~\cite{yang2024fla}.

\subsection{Main Recommendation Performance (RQ1)}
\label{sec:main_results}

Table~\ref{tab:short_seq_results_200} (and Table~\ref{tab:short_seq_results_500} in Appendix~\ref{app:additional_rec}) summarize performance across four datasets.

\textbf{Rank-$L$ models dominate.} A clear trend emerges: models employing optimization-inspired or non-linear update mechanisms (Titans, PRISM, ATLAS) consistently outperform standard Rank-1 linear models (SLA, GLA, MoM). The top three linear models by Mean Rank all belong to this category, empirically validating that higher-rank injection captures complex interest evolution more effectively.

\textbf{Amortized $\approx$ Explicit.} PRISM achieves performance parity with---and in several cases surpasses---explicit serial solvers (TTT, Titans). On \textit{Amazon Movies}, PRISM attains the highest AUC among all linear models, outperforming Titans. This confirms that Input-Anchored Loop Unrolling successfully distills the multi-step optimization trajectory into a single parallel pass, without requiring actual gradient descent at runtime.

\textbf{Narrowing the Transformer gap.} While $O(N^2)$ Transformers (SASRec, HSTU) retain a slight edge due to full context visibility, the gap is surprisingly narrow. On \textit{Amazon Books}, PRISM's AUC is virtually identical to SASRec, suggesting that the ``expressivity tax'' of linearization becomes negligible with Rank-$L$ state tracking.

\begin{table*}[t]
\caption{Recommendation performance (Hit@200, NDCG@200, AUC). Best linear results \textbf{bold}, second \underline{underlined}.}
\label{tab:short_seq_results_200}
\centering
\scriptsize
\setlength{\tabcolsep}{3pt}
\resizebox{\textwidth}{!}{%
\begin{tabular}{l|ccc|ccc|ccc|ccc|c}
\toprule
\multirow{2}{*}{Model} & \multicolumn{3}{c|}{Amazon\_Books} & \multicolumn{3}{c|}{Amazon\_Movies} & \multicolumn{3}{c|}{Amazon\_Elec} & \multicolumn{3}{c|}{Yelp} & \multirow{2}{*}{Mean Rank} \\
 & H@200 & N@200 & AUC & H@200 & N@200 & AUC & H@200 & N@200 & AUC & H@200 & N@200 & AUC &  \\
\midrule
\multicolumn{14}{l}{\textbf{Rank-1, Parallel}} \\
SLA & 0.1129 & 0.0212 & 0.8866 & 0.1137 & 0.0227 & 0.7461 & 0.1290 & \textbf{0.0243} & 0.7023 & 0.1627 & \underline{0.0311} & 0.9392 & 6.75 \\
GLA & 0.0879 & 0.0158 & 0.8752 & 0.1193 & 0.0222 & 0.7478 & 0.1196 & 0.0210 & 0.7008 & 0.1129 & 0.0220 & 0.8943 & 9.38 \\
MoM & 0.0854 & 0.0158 & 0.8705 & \underline{0.1397} & \underline{0.0281} & 0.7705 & 0.1333 & 0.0238 & 0.7042 & 0.1642 & 0.0306 & 0.9346 & 5.25 \\
GSA & 0.1226 & 0.0233 & 0.8870 & 0.1160 & 0.0222 & 0.7427 & 0.1318 & 0.0228 & 0.7087 & 0.1629 & 0.0307 & 0.9383 & 7.00 \\
Mamba-2 & 0.1234 & 0.0234 & 0.8872 & 0.1372 & 0.0276 & \underline{0.7713} & 0.1338 & 0.0237 & \underline{0.7157} & 0.1621 & 0.0306 & 0.9385 & 5.00 \\
GDeltaNet & 0.1214 & 0.0230 & 0.8844 & 0.1241 & 0.0253 & 0.7504 & 0.1333 & 0.0242 & \textbf{0.7159} & \underline{0.1648} & 0.0308 & 0.9367 & 5.12 \\
\midrule
\multicolumn{14}{l}{\textbf{Rank-$L$, Serial}} \\
TTT & 0.1255 & 0.0234 & 0.8871 & 0.1288 & 0.0256 & 0.7591 & 0.1344 & 0.0234 & 0.6946 & 0.1636 & 0.0306 & 0.9375 & 5.25 \\
Titans & \textbf{0.1272} & \textbf{0.0243} & 0.8869 & 0.1358 & 0.0278 & 0.7652 & 0.1313 & 0.0233 & 0.7007 & \textbf{0.1653} & \textbf{0.0313} & \textbf{0.9395} & \underline{3.62} \\
ATLAS & 0.1190 & 0.0223 & \underline{0.8884} & 0.1367 & 0.0278 & 0.7710 & \textbf{0.1421} & \textbf{0.0243} & 0.7042 & 0.1629 & 0.0304 & 0.9383 & 5.00 \\
\midrule
\multicolumn{14}{l}{\textbf{Rank-$L$, Parallel (Ours)}} \\
PRISM & \underline{0.1258} & \underline{0.0238} & \textbf{0.8888} & \textbf{0.1411} & \textbf{0.0289} & \textbf{0.7727} & \underline{0.1409} & 0.0237 & 0.7134 & 0.1637 & 0.0310 & \underline{0.9393} & \textbf{2.62} \\
\midrule
\multicolumn{14}{l}{\textbf{Full-Rank, Quadratic}} \\
SASRec & 0.1138 & 0.0215 & 0.8910 & 0.1272 & 0.0244 & 0.7677 & 0.1438 & 0.0273 & 0.7293 & 0.1723 & 0.0325 & 0.9410 & -- \\
HSTU & 0.1224 & 0.0233 & 0.8835 & 0.1399 & 0.0293 & 0.7748 & 0.1407 & 0.0250 & 0.7189 & 0.1595 & 0.0295 & 0.9324 & -- \\
\bottomrule
\end{tabular}
}
\vspace{-0.3cm}
\end{table*}

\subsection{Training Efficiency (RQ2)}
\label{sec:efficient}

\begin{figure}[t]
    \centering
    \includegraphics[width=0.48\textwidth]{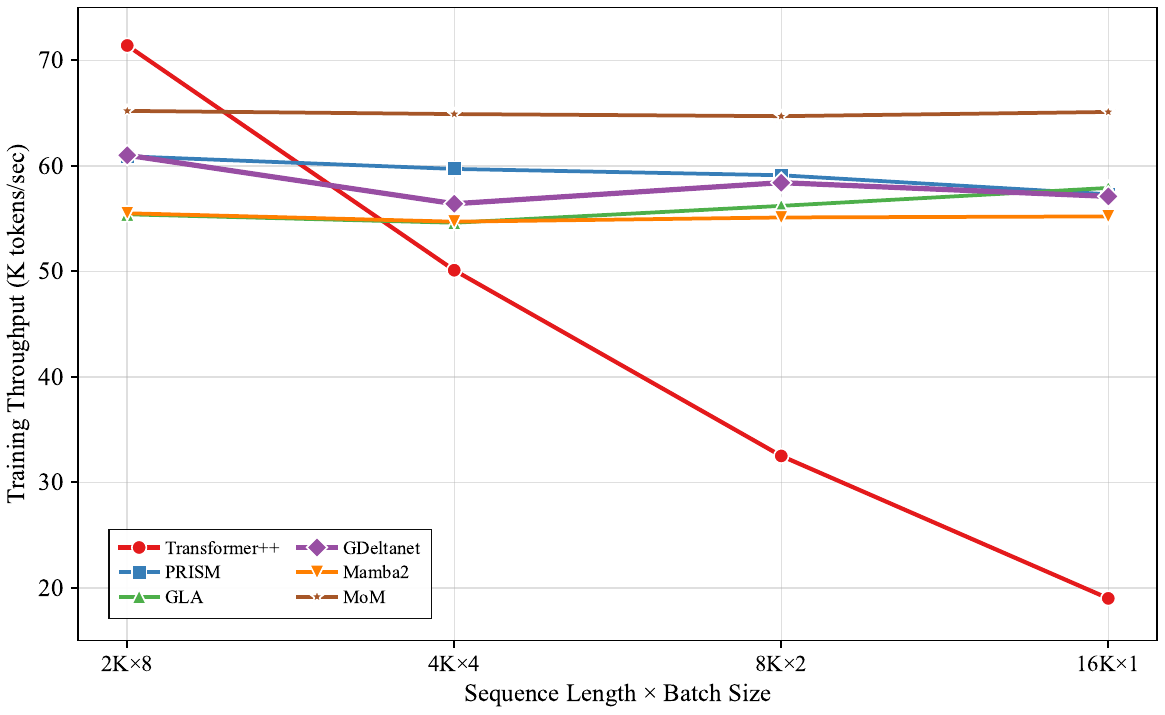}
    \caption{Training throughput (K tokens/s) of 0.13B models on an H20 GPU across sequence length $\times$ batch size configurations. }
    \label{fig:throughput}
\end{figure}

A core claim of PRISM is that Rank-$L$ expressivity need not sacrifice hardware efficiency. Figure~\ref{fig:throughput} validates this by comparing training throughput as a function of sequence length.

\textbf{PRISM $\approx$ Rank-1 baselines.} PRISM maintains stable throughput (${\sim}61\text{K} \to 57\text{K}$ tokens/s) across all context lengths, performing on par with Gated DeltaNet, GLA, Mamba-2, and MoM (all within the 55--65K tokens/s range). The minor overhead ($<$16\%) arises from the auxiliary GEMM operations for the $L$ direction/gain projections.

\textbf{Transformer degrades quadratically.} Transformer++ achieves the highest throughput at short sequences (71.4K tokens/s at 2K) thanks to highly optimized FlashAttention-2~\cite{dao2023flashattention}, but suffers 3.8$\times$ slowdown at 16K due to $O(N^2)$ complexity.

\textbf{TTT is 174$\times$ slower.} TTT achieves only 0.34K tokens/s due to its inherently sequential state-dependent gradient computation. This starkly demonstrates the practical necessity of PRISM's Input-Anchored design: achieving Rank-$L$ updates \textit{without} serial dependencies is not merely a theoretical necessity---it is a \textbf{174$\times$ throughput advantage} in practice.

\subsection{Ablation Study: Deconstructing the Solver (RQ3)}
\label{sec:ablation}

To verify that each component of PRISM's solver contributes as hypothesized by our framework (\S\ref{sec:method}), we conduct controlled ablations on \textit{Amazon Electronics}.

\begin{table}[t]
\caption{Component ablation on \textit{Amazon Electronics}. Each row removes one component from the full PRISM model. All components contribute independently.}
\label{tab:ablation_solver}
\centering
\small
\setlength{\tabcolsep}{4pt}
\begin{tabular}{l|cc|c}
\toprule
\multirow{2}{*}{\textbf{Variant}} & \multicolumn{2}{c|}{\textbf{Hit@K}} & \multirow{2}{*}{\textbf{AUC}} \\
 & 200 & 500 &  \\
\midrule
\textbf{PRISM (Full)} & \textbf{0.1409} & \textbf{0.2613} & \textbf{0.7134} \\
\midrule
w/o Iterative Refinement ($L{=}0$) & 0.1155 & 0.2374 & 0.6805 \\
w/o Non-Linearity (GELU $\to$ Identity) & 0.1316 & 0.2477 & 0.7047 \\
w/o ShortConv Anchor & \underline{0.1406} & \underline{0.2584} & 0.7076 \\
w/o Gain Predictor ($\mathbf{p}^{(l)}$) & 0.1383 & 0.2406 & \underline{0.7098} \\
\bottomrule
\end{tabular}
\end{table}

\textbf{Iterative depth is the dominant factor.} Removing the iterative refinement ($L{=}0$, i.e., only a single solver step on top of the GDN base) causes the largest degradation, directly validating that multi-step refinement is critical for capturing complex user interest transitions.

\textbf{Non-linearity enables optimization-quality updates.} Removing GELU from the refinement loop ($\boldsymbol{\delta}_t^{(l)} = \mathbf{p}_t^{(l)} \odot \mathbf{r}_t^{(l)}$ instead of $\mathrm{GELU}(\mathbf{p}_t^{(l)} \odot \mathbf{r}_t^{(l)})$) costs $-$0.9pp AUC. This confirms that the non-linear shaping---which approximates the contextual gain $\phi'(\mathbf{S}^{0}_t\mathbf{k}_t)$ in the ideal solver---provides expressivity beyond what linear multi-rank injection alone can achieve.

\textbf{Anchor and gain provide grounding.} Removing ShortConv or the gain predictor degrades global ranking, confirming that the optimization trajectory must be anchored in local context and modulated by an adaptive step size to be effective over long sequences.

\subsection{Mechanistic Probing: Storage vs.\ Computation (RQ3)}
\label{sec:mechanistic_probing}

Beyond aggregate metrics, we probe \textit{why} PRISM outperforms Rank-1 baselines by disentangling memory capacity from computational expressivity. We design controlled tasks (Appendix~\ref{app:probing_tasks}) in a resource-starved regime ($D{=}16$, $V{=}64$, $N{=}128$) to force architectural bottlenecks to surface.

\begin{table}[t]
\caption{Mechanistic probing accuracy ($D{=}16$). \textbf{Logic} tasks reveal the Linearity Wall: Rank-1 models (LA, MoM) fail at chance level while PRISM matches the Transformer.}
\label{tab:probing_results}
\centering
\small
\begin{tabular}{ll|cccc}
\toprule
\textbf{Type} & \textbf{Task} & \textbf{TF} & \textbf{LA} & \textbf{MoM} & \textbf{PRISM} \\
\midrule
\multirow{3}{*}{Memory} 
& MQAR & \textbf{0.98} & \textbf{0.98} & \textbf{0.98} & \textbf{0.98} \\
& Poly-Recall & \textbf{1.00} & \textbf{1.00} & \textbf{1.00} & \textbf{1.00} \\
& Var.\ Tracking & \textbf{0.40} & 0.34 & 0.35 & \underline{0.37} \\
\midrule
\multirow{2}{*}{Logic} 
& Parity & \textbf{1.00} & 0.49 & 0.50 & \textbf{1.00} \\
& XOR & \textbf{1.00} & 0.50 & 0.49 & \textbf{1.00} \\
\midrule
\multirow{2}{*}{Structure} 
& Mod.\ Add & \underline{0.23} & 0.07 & 0.06 & \textbf{0.50} \\
& Palindrome & 0.49 & 0.49 & 0.50 & \textbf{0.99} \\
\midrule
\multirow{2}{*}{Control} 
& MUX & \textbf{0.98} & \textbf{0.98} & 0.97 & \textbf{0.98} \\
& Silence Gate & \textbf{0.82} & 0.51 & 0.50 & \underline{0.66} \\
\bottomrule
\end{tabular}

\end{table}

Three findings emerge:

\textbf{(1) Memory capacity is not the bottleneck.} On pure associative retrieval (MQAR, Poly-Recall), all models achieve ${\sim}100\%$. The state size ($d \times d$) is the binding constraint, not the update rule. PRISM's complex writing mechanism does not sacrifice storage efficiency.

\textbf{(2) The Linearity Wall is real.} On XOR and Parity, LA and MoM both collapse to chance, while PRISM achieves $100\%$. Crucially, MoM's failure proves that \textit{spatial} rank expansion (multiple linear experts) cannot replace \textit{computational} depth (iterative non-linear refinement). This directly validates PRISM's Rank-$L$ design: the benefit comes from the iterative non-linear \textit{computation} within the injection operator, not merely from having more output directions.

\textbf{(3) Input-anchoring provides structural inductive bias.} On Palindrome and Modulo Addition, PRISM outperforms even the Transformer in this constrained regime. The ShortConv anchor provides a strong prior for local structural patterns, whereas the Transformer must learn positional arithmetic from scratch with limited capacity.




\section{Conclusion}

In this work, we identify the \textbf{Rank-1 Bottleneck} as the primary expressivity limitation of efficient sequence models: standard linear updates, while parallel, lack the optimization depth required to capture complex dependencies. We introduce \textbf{PRISM}, a framework that resolves this tension by achieving Rank-$L$ state updates while preserving parallel scan compatibility. Through \textbf{Write-Forget Decoupling}, we maintain stable linear decay while allocating capacity to a non-linear, Rank-$L$ injection operator. \textbf{Input-Anchored Loop Unrolling} bypasses the serial dependency of iterative solvers, collapsing a non-linear optimization loop into a single fused parallel operator. Our analysis proves that this design yields \textbf{Rank Accumulation} within a linear recurrence, expanding the hypothesis space beyond one-shot updates while maintaining the associative structure required for hardware-efficient prefix scans. Empirically, PRISM matches the fidelity of explicit serial solvers and narrows the gap with full-attention Transformers, while achieving \textbf{174$\times$} higher throughput than TTT, validating amortized optimization as a practical pathway beyond the Rank-1 barrier.

\section*{Acknowledgments}
Z. Lin was supported by the NSF China (No. 62276004), the Beijing Natural Science Foundation (No. L257007), and the Beijing Major Science and Technology Project (No. Z251100008425006).

\newpage
\section*{Impact Statement}
This paper presents work whose goal is to advance the field of efficient sequence modeling by bridging the gap between optimization fidelity and hardware-efficient parallel training. There are many  potential societal consequences of our work, none (of) which we feel must be specifically highlighted here.

\bibliography{sections/citation.bib}
\bibliographystyle{icml2026}

\clearpage
\appendix
\onecolumn

\section{Nomenclature and Definitions}
\label{app:nomenclature}

To ensure clarity and consistency across the main text and appendices, we summarize the key mathematical symbols used in this paper in Table~\ref{tab:symbol_table}.

\begin{table}[h]
    \centering
    \caption{Table of Notations.}
    \label{tab:symbol_table}
    \begin{tabular}{c l}
        \toprule
        \textbf{Symbol} & \textbf{Description} \\
        \midrule
        $L$ & Number of refinement layers (solver steps). \\
        $l$ & Index for refinement layers, $l \in \{1, \dots, L\}$. \\
        $\mathbf{S}_t \in \mathbb{R}^{d \times d}$ & Recurrent memory state at time step $t$. \\
        $\mathbf{A}_t \in \mathbb{R}^{d \times d}$ & Forgetting operator (Linear Decay). \\
        $\mathbf{B}_t \in \mathbb{R}^{d \times d}$ & Injection operator (Rank-$L$ Update). \\
        $\mathbf{u}_t \in \mathbb{R}^d$ & Input-anchored proxy (from ShortConv). \\
        $\mathbf{k}^{(l)}_t \in \mathbb{R}^{d}$ & Direction basis for step $l$ at time $t$. \\
        $\mathbf{r}^{(l)}_t \in \mathbb{R}^d$ & Residual estimate for step $l$ at time $t$. \\
        $\mathbf{p}^{(l)}_t \in \mathbb{R}^d$ & Contextual gain vector for step $l$ (approximates $\phi'$). \\
        $\boldsymbol{\delta}^{(l)}_t \in \mathbb{R}^d$ & Correction vector at step $l$. \\
        $\beta^{(l)}_t \in \mathbb{R}$ & Confidence gate (step size) for step $l$. \\
        $\alpha_t \in (0,1]$ & Scalar decay gate. \\
        $\eta^{(l)} \in \mathbb{R}^{+}$ & Scalar step-size for the solver. \\
        $\lambda_t \in (0,1]$ & Learnable scaling gate. \\
        \bottomrule
    \end{tabular}
\end{table}

\begin{algorithm}[tb]
    \caption{PRISM Forward Pass: Input-Anchored Parallel Prediction}
    \label{alg:PRISM_forward}
    \begin{algorithmic}[1]
       \STATE {\bfseries Input:} Sequence $\mathbf{X} \in \mathbb{R}^{B \times N \times D}$, Parameters $\Theta$
       
       \STATE \textcolor{blue}{\textsc{// Phase 1: Input-Anchored Proxy (Fully Parallel)}}
       \STATE $\mathbf{U} \gets \text{SiLU}(\text{ShortConv}(\mathbf{X}))$ \hfill\COMMENT{Anchor: $\mathbf{u}_t \approx \mathbf{S}_{t-1}\mathbf{k}_t$}
       \STATE $\mathbf{Q}, \mathbf{V} \gets \text{Linear}(\mathbf{U})$ 
       \STATE $\boldsymbol{\alpha} \gets \sigma(\text{Linear}_{\alpha}(\mathbf{U}))$ \hfill \COMMENT{Decay gate}
       
       \STATE \textit{Compute per-step components for $l=1 \dots L$ (in parallel):}
       \FOR{$l=1$ \textbf{to} $L$ \textbf{in parallel}}
           \STATE $\mathbf{K}^{(l)} \gets \text{Linear}_{k}^{(l)}(\mathbf{U})$ \hfill\COMMENT{Direction basis}
           \STATE $\mathbf{P}^{(l)} \gets \text{Linear}_{p}^{(l)}(\mathbf{U})$ \hfill\COMMENT{Gain (approximates $\phi'$)}
           \STATE $\boldsymbol{\beta}^{(l)} \gets \text{Linear}_{\beta}^{(l)}(\mathbf{U})$ \hfill\COMMENT{Confidence gate}
       \ENDFOR
    
       \STATE $\mathbf{R}^{(1)} \gets \mathbf{V} - \mathbf{U}$ \hfill\COMMENT{Initial residual}
    
       \STATE \textcolor{blue}{\textsc{// Phase 2: Rank-$L$ Accumulation (Fused into Scan Kernel)}}
       \FOR{$t=1$ \textbf{to} $N$} 
           \STATE {\bfseries Register:} $\mathbf{r}_{t} \gets \mathbf{R}^{(1)}_t$; $\mathbf{B}_{t} \gets \mathbf{0}$
           \FOR{$l=1$ \textbf{to} $L$} 
               \STATE $\boldsymbol{\delta}_t^{(l)} \gets \text{GELU}(\mathbf{p}^{(l)}_t \odot \mathbf{r}_{t})$ \hfill\COMMENT{Gain $\times$ Residual}
               \STATE $\mathbf{B}_{t} \mathrel{+}= \beta^{(l)}_t \cdot \boldsymbol{\delta}_t^{(l)} {\mathbf{k}^{(l)}_t}^\top$ \hfill\COMMENT{Rank accumulation}
               \STATE $\mathbf{r}_{t} \gets \mathbf{r}_{t} - \boldsymbol{\delta}_t^{(l)}$ \hfill\COMMENT{Greedy subtraction}
           \ENDFOR
           
           \STATE \textcolor{blue}{\textsc{// State Update (Parallel Scan Compatible)}}
           \STATE $\mathbf{S}_t \gets \mathbf{S}_{t-1} \cdot \alpha_t (\mathbf{I} - \beta^{(1)}_t \mathbf{k}^{(1)}_t {\mathbf{k}^{(1)}_t}^\top) + \mathbf{B}_{t}$
           \STATE $\mathbf{y}_t \gets \text{OutProj}(\mathbf{S}_t \cdot \mathbf{q}_t)$
       \ENDFOR
    \end{algorithmic}
\end{algorithm}

\section{From TTT-MLP to PRISM: A Structural Derivation}
\label{app:ttt_to_prism}

In this section, we trace the precise structural correspondence between TTT-MLP's implicit multi-step gradient descent and PRISM's explicit rank-$L$ injection, motivating PRISM as a ``distillation'' of TTT-MLP's computational structure onto a parallelizable linear state.

\subsection{TTT-MLP's Gate $\times$ Residual $\times$ Direction Structure}

TTT-MLP~\cite{sun2024learning} maintains an MLP $f(\mathbf{k}) = \mathbf{W}_2 \phi(\mathbf{W}_1 \mathbf{k})$ as its memory state, where $\mathbf{W}_2 \in \mathbb{R}^{d \times h}$ and $\mathbf{W}_1 \in \mathbb{R}^{h \times d}$ are the weight matrices and $\phi$ is a nonlinear activation. At each token, TTT-MLP updates $\mathbf{W}_2$ via gradient descent on the reconstruction loss $\|\mathbf{v}_t - \mathbf{W}_2 \phi(\mathbf{W}_1 \mathbf{k}_t)\|^2$.

The $\mathbf{W}_2$ gradient decomposes as:
\begin{equation}
    \Delta \mathbf{W}_2 = \beta \cdot \mathbf{r}_t \cdot \phi(\mathbf{W}_1 \mathbf{k}_t)^\top,
\end{equation}
where $\mathbf{r}_t = \mathbf{v}_t - \mathbf{W}_2 \phi(\mathbf{W}_1 \mathbf{k}_t)$ is the residual. Expanding column-wise:
\begin{equation}
    \Delta \mathbf{W}_2[:, m] = \beta \cdot \underbrace{\phi(\mathbf{w}_{(m)}^\top \mathbf{k}_t)}_{\text{scalar gate}} \cdot \underbrace{\mathbf{r}_t}_{\text{residual}},
\end{equation}
where $\mathbf{w}^{\top}_{(m)}$ is the $m$-th row of $\mathbf{W}_1$. Each hidden unit contributes a \textbf{scalar-gated copy of the residual}---this is the ``gate $\times$ residual $\times$ direction'' pattern.

\subsection{Multi-Step GD: Two Sources of Seriality}

At gradient step $l$, the update becomes:
\begin{equation}
    \Delta \mathbf{W}_2^{(l)} = \beta_l \cdot \mathbf{r}_t^{(l)} \cdot \phi(\mathbf{W}_1^{(l)} \mathbf{k}_t)^\top.
\end{equation}

This creates \textbf{two serial dependencies}:
\begin{enumerate}
    \item \textbf{Residual dependency}: $\mathbf{r}^{(l)}_t = \mathbf{v}_t - \mathbf{W}_2^{(l)} \phi(\mathbf{W}_1^{(l)} \mathbf{k}_t)$ depends on the updated $\mathbf{W}_2^{(l)}$ from the previous step.
    \item \textbf{Direction dependency}: $\phi(\mathbf{W}_1^{(l)} \mathbf{k}_t)$ changes because $\mathbf{W}_1$ co-evolves with $\mathbf{W}_2$ during optimization.
\end{enumerate}

Both dependencies force step-level seriality: step $l+1$ cannot begin until step $l$ completes.

\subsection{The Rank-1 Impossibility on Linear State}

A natural question: can we achieve TTT-MLP's multi-direction capability on a \textit{linear} state $\mathbf{S} \in \mathbb{R}^{d \times d}$? The gradient of the OCP objective with respect to $\mathbf{S}$ is:
\begin{equation}
    \nabla_{\mathbf{S}} \mathcal{L} \propto \mathbf{r}_t \cdot \mathbf{k}_t^\top. 
\end{equation}

The column direction is locked to $\mathbf{k}_t^\top$. Performing $L$ steps of gradient descent with the \textit{same} key accumulates to:
\begin{equation}
    \sum_{l=1}^L \nabla^{(l)}_t = \left(\sum_{l=1}^L \mathbf{r}^{(l)}_t\right) \cdot \mathbf{k}_t^\top,
\end{equation}
which remains \textbf{Rank-1} regardless of $L$. Multi-step gradient descent on a linear state with a single key \textit{cannot} achieve multi-direction.

\subsection{Two Paths to Multi-Direction}

\begin{itemize}
    \item \textbf{TTT-MLP's path}: Use a nonlinear MLP state. The hidden layer produces an $h$-dimensional feature vector $$ \mathbf{h}_t^{(l)} = \phi(\mathbf{W}_1^{(l)}\mathbf{k}_t), $$ whose entries are scalar feature gates rather than state-space direction vectors. The update of $\mathbf{W}_2$ is $$ \Delta \mathbf{W}_2^{(l)} = \beta_l \mathbf{r}_t^{(l)} (\mathbf{h}_t^{(l)})^\top, $$ which is rank-1 at each inner step. Across multiple inner steps, the cumulative update can become higher-rank only when the residuals or hidden feature vectors change. In particular, the evolution of $\mathbf{W}_1^{(l)}$ makes $\mathbf{h}_t^{(l)}$ step-dependent, introducing an inherent serial dependency.
    \item \textbf{PRISM's path}: Keep the linear state $\mathbf{S}$ and explicitly predict $L$ vector-valued state-space directions $$ \mathbf{k}_t^{(l)} = \mathbf{W}_k^{(l)}\mathbf{u}_t, l=1,...,L. $$ The corresponding rank-$L$ update is constructed in one input-anchored forward pass: $$ \mathbf{B}_t = \sum_{l=1}^{L} \beta_t^{(l)} \boldsymbol{\delta}_t^{(l)} (\mathbf{k}_t^{(l)})^\top. $$ Thus, PRISM is a parallelizable structural analogue of the multi-step factorization in TTT-MLP, rather than an exact algebraic equivalence.
\end{itemize}

\subsection{Structural Correspondence}

PRISM reconstructs TTT-MLP's computational structure on a linear state:
\begin{equation}
    \mathbf{B}_t = \sum_{l=1}^{L} \beta_t^{(l)} \boldsymbol{\delta}_t^{(l)} {\mathbf{k}_t^{(l)}}^\top.
\end{equation}

\begin{table}[h]
\centering
\caption{Structural correspondence between TTT-MLP and PRISM.}
\begin{tabular}{l|l}
\toprule
\textbf{TTT-MLP (implicit)} & \textbf{PRISM (explicit)} \\
\midrule
$\mathbf{h}_t^{(l)} = \phi(\mathbf{W}_1^{(l)}\mathbf{k}_t)$ — $h$ scalar feature coefficients & $\mathbf{k}_t^{(l)} = \mathbf{W}_k^{(l)}\mathbf{u}_t$ — explicit vector-valued state-space direction \\
$\phi(\mathbf{w}_l^\top \mathbf{k}_t)$ — gain vector & $\mathbf{p}^{(l)} = \mathbf{W}_p^{(l)} \mathbf{u}_t$ — element-wise gain vector \\
$\mathbf{r}_t^{(l)}$ — decreases via $\mathbf{W}_2$ update (serial) & $\mathbf{r}_t^{(l+1)} = \mathbf{r}_t^{(l)} - \boldsymbol{\delta}^{(l)}$ — explicit subtraction (parallel) \\
Synchronized (non-parallelizable) & Decoupled (parallelizable) \\
\bottomrule
\end{tabular}
\end{table}

\section{Analysis of Degeneracy Under Linear Mapping}
\label{app:analysis_degeneracy}

This section formally analyzes the representational degeneracy inherent to linear attention models that employ linear projections for both key and value embeddings. We show that under this setup, the optimal state matrix admits a closed-form, sequence-independent solution, rendering online recurrent updates functionally redundant.

Consider a standard linear attention framework, where the key and value vectors for token $t$ are computed as linear transformations of the input token representation $\mathbf{x}_t$:
\begin{equation}
    \mathbf{k}_t = \mathbf{W}_k \mathbf{x}_t, \quad \mathbf{v}_t = \mathbf{W}_v \mathbf{x}_t,
\end{equation}
where $\mathbf{W}_k \in \mathbb{R}^{d \times d_e}$ and $\mathbf{W}_v \in \mathbb{R}^{d \times d_e}$ are learnable projection matrices. The model maintains a recurrent state matrix $\mathbf{S}_t$, updated online to minimize:
\begin{equation}
    \mathcal{L}_t(\mathbf{S}_{t-1}) = \frac{1}{2} \left\| \mathbf{S}_{t-1} \mathbf{k}_t - \mathbf{v}_t \right\|_2^2.
\end{equation}
Assume $\mathbf{W}_k$ is square and invertible ($d = d_e$, $\det(\mathbf{W}_k) \neq 0$). Setting the gradient to zero and substituting the linear definitions yields:
\begin{equation}
    \mathbf{S}^\star = \mathbf{W}_v \mathbf{W}_k^{-1}.
\end{equation}
This solution is both time-invariant and sequence-independent---the recurrent state fails to encode any sequential information, functioning merely as a fixed linear mapping. One way to overcome this degeneracy is to introduce nonlinearity $\phi$ into the prediction model, leading to the OCP objective in Eq.~\ref{eq:ocp_full}.

\section{OCP Gradient and Parallelism Constraints}
\label{app:derivations}

This section formalizes why the multi-step solution of the OCP objective (Eq.~\ref{eq:ocp_full}) is incompatible with parallel scan, complementing the main text derivation in \S\ref{sec:rank_parallel}.

\subsection{Parallelism Violation}

Recall from \S\ref{sec:preliminary} that the gradient of the OCP retrieval term with respect to $\mathbf{S}$ yields the update $\Delta \mathbf{S} = \beta_t (\mathbf{v}_t - \phi(\mathbf{S}\mathbf{k}_t)) \odot \phi'(\mathbf{S}\mathbf{k}_t) \cdot \mathbf{k}_t^\top$. In the $(\mathbf{A}_t, \mathbf{B}_t)$ formulation:
\begin{equation}
    \mathbf{A}_t = \alpha_t \mathbf{I}, \quad \mathbf{B}_t = \beta_t\cdot (\mathbf{v}_t - \phi(\mathbf{S}_{t-1}\mathbf{k}_t))\odot \phi'(\mathbf{S}_{t-1}\mathbf{k}_t)\cdot \mathbf{k}_t^\top.
\end{equation}
Since $\mathbf{B}_t$ depends explicitly on $\mathbf{S}_{t-1}$ through both $\phi$ and $\phi'$, the composition $\theta_{t:t+1} = (\mathbf{A}_t \mathbf{A}_{t+1},\; \mathbf{B}_t \mathbf{A}_{t+1} + \mathbf{B}_{t+1})$ cannot be pre-computed without materializing intermediate states, violating Definition~\ref{def:parallelism} and forcing $O(N)$ sequential computation.

\subsection{PRISM's Resolution}

PRISM replaces both state-dependent terms with input-anchored proxies:
\begin{itemize}
    \item $\phi(\mathbf{S}_{t-1}\mathbf{k}_t) \approx \mathbf{u}_t = \mathrm{ShortConv}(\mathbf{x}_{t})$
    \item $\phi'(\mathbf{S}_{t-1}\mathbf{k}_t) \approx \mathbf{p}^{(l)}_t = \mathbf{W}_p^{(l)} \mathbf{u}_t$
\end{itemize}
This restores state-independence ($\mathbf{A}_t, \mathbf{B}_t$ depend only on inputs) while preserving the geometric structure of the multi-step update trajectory. The resulting operators satisfy Definition~\ref{def:parallelism} and admit parallel prefix scanning.

\section{Sensitivity Analysis of Write-Forget Dynamics}
\label{app:stability}

\subsection{Theoretical Foundation: HiPPO, ODEs, and ZOH}

Following HiPPO~\cite{gu2020hippo}, the problem of online function approximation is cast as an ODE:
\begin{equation}
    \dot{\mathbf{h}}(t) = \mathcal{A}(t)\mathbf{h}(t) + \mathcal{B}(t)\mathbf{x}(t).
\end{equation}
Under Zero-Order Hold (ZOH) discretization, this yields $\mathbf{h}_k = \mathbf{A}_k \mathbf{h}_{k-1} + \mathbf{B}_k \mathbf{x}_k$ where $\mathbf{A}_k = \exp(\Delta \mathcal{A})$ inherits the spectral stability of $\mathcal{A}$.

\subsection{Spectral Bounds via Gated DeltaNet}

\begin{lemma}[Bounded Spectrum of the Forgetting Operator]
\label{lem:deltanet_bound}
The forgetting operator $\mathbf{A}_t = \mathbf{I} - \beta_t \mathbf{k}_t \mathbf{k}_t^\top$ with $\beta_t \in [0, 1]$ and $\|\mathbf{k}_t\| \leq 1$ satisfies $\lambda(\mathbf{A}_t) \in [0, 1]$.
\end{lemma}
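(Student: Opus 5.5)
The plan is to exploit that $\mathbf{A}_t = \mathbf{I} - \beta_t \mathbf{k}_t \mathbf{k}_t^\top$ is a symmetric rank-one perturbation of the identity, so its full eigendecomposition can be written down directly. All eigenvalues are then real, and we only need to locate them.

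First, treat the degenerate case $\mathbf{k}_t = \mathbf{0}$ separately. Here $\mathbf{A}_t = \mathbf{I}$, so every eigenvalue equals $1 \in [0,1]$.

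Otherwise, decompose $\mathbb{R}^{d}$ as $\operatorname{span}(\mathbf{k}_t) \oplus \mathbf{k}_t^{\perp}$.

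\begin{itemize}
\item For any $\mathbf{w} \perp \mathbf{k}_t$ we have $\mathbf{A}_t \mathbf{w} = \mathbf{w}$. This gives the eigenvalue $1$ with multiplicity $d-1$.
\item For $\mathbf{w} = \mathbf{k}_t$ we have $\mathbf{A}_t \mathbf{k}_t = \mathbf{k}_t - \beta_t \mathbf{k}_t (\mathbf{k}_t^\top \mathbf{k}_t) = (1 - \beta_t \|\mathbf{k}_t\|^2)\,\mathbf{k}_t$. This gives the remaining eigenvalue $\lambda_\ast = 1 - \beta_t \|\mathbf{k}_t\|^2$.
\end{itemize}

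These $d$ eigenvectors are mutually orthogonal and span $\mathbb{R}^d$, so the list of eigenvalues is complete.

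It remains to bound $\lambda_\ast$. Since $\beta_t \in [0,1]$ and $\|\mathbf{k}_t\|^2 \in [0,1]$, their product satisfies $0 \le \beta_t \|\mathbf{k}_t\|^2 \le 1$. Hence $\lambda_\ast \in [0,1]$, and so $\lambda(\mathbf{A}_t) \subseteq [0,1]$.

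As a remark useful for the later stability theorems, note that $\mathbf{A}_t$ is symmetric positive semidefinite with spectral norm at most $1$. Multiplying by the scalar decay $\alpha_t \in (0,1]$, as in the PRISM forgetting operator, keeps the spectrum inside $[0,1]$.

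There is no real obstacle in this argument. The only point needing care is the role of the hypotheses. The bound $\|\mathbf{k}_t\| \le 1$, which in practice is enforced by $\ell_2$-normalizing the keys, is exactly what prevents $\lambda_\ast$ from becoming negative. Without it, for example with $\beta_t \|\mathbf{k}_t\|^2 > 1$, we could get $\lambda_\ast < 0$, or $\lambda_\ast < -1$ and hence instability.
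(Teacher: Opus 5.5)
Your proof is correct and follows the same route as the paper: the eigenvalue $1$ with multiplicity $d-1$ on $\mathbf{k}_t^{\perp}$, and the single remaining eigenvalue $1-\beta_t\|\mathbf{k}_t\|^2\in[0,1]$ along $\mathbf{k}_t$. Your separate handling of $\mathbf{k}_t=\mathbf{0}$ and your check that the orthogonal eigenvectors form a complete basis are small details that the paper's one-line argument leaves implicit.
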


\begin{proof}
The matrix $\mathbf{A}_t = \mathbf{I} - \beta_t \mathbf{k}_t \mathbf{k}_t^\top$ has $d-1$ eigenvalues equal to 1 (for vectors $\perp \mathbf{k}_t$) and one eigenvalue $\lambda_{min} = 1 - \beta_t \|\mathbf{k}_t\|^2 \in [0, 1]$. Thus $\rho(\mathbf{A}_t) \leq 1$, ensuring non-expansive dynamics.
\end{proof}

\textbf{Remark.} Su~\cite{kexuefm11563} independently proved a stronger result: the elements of the \textit{inverse} of DeltaNet's core matrix $(I - \beta \mathbf{k}\mathbf{k}^\top)^{-1}$ are always bounded in $[-1, 1]$, providing additional numerical stability guarantees for the parallel scan computation. This complements our eigenvalue bound by showing that even the inverse (required during chunk-wise parallel computation) remains well-conditioned.

\subsection{Logarithmic Worst-Case Stability of Forgetting}

\begin{theorem}[Logarithmic Worst-Case Stability]
\label{thm:log_stability}
Under quadratic scaling of precision error ($\sigma^2 f(\lambda) \leq K\gamma^2$ where $\gamma = 1 - |\lambda|$), the worst-case accumulated error of the forgetting operator grows at most $O(\ln T)$.
\end{theorem}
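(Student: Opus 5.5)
We will treat the forgetting dynamics mode by mode. By Lemma~\ref{lem:deltanet_bound}, each forgetting operator is non-expansive, so we work in its eigenbasis. Along a single mode with eigenvalue $\lambda$, $|\lambda|\le 1$, a precision (approximation) error $\epsilon_s$ injected at step $s$ is propagated by $\lambda^{T-s}$ up to time $T$. The accumulated error variance in that mode is therefore
\begin{equation*}
E_T(\lambda)=\sum_{s=1}^{T}|\lambda|^{2(T-s)}\,\sigma^2 f(\lambda).
\end{equation*}
Here $\sigma^2 f(\lambda)$ denotes the per-step error variance, which the hypothesis permits to depend on the mode. We treat $\lambda$ as fixed, or more generally bound the time-varying product by $|\lambda|\le 1-\gamma$ uniformly. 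The main text summarizes the worst case as "$O(\ln T)$", so we aim to bound $\sup_\lambda E_T(\lambda)$, or the quantity summed over a spectrum, by $C\ln T$.

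First, sum the geometric series: $E_T(\lambda)\le \sigma^2 f(\lambda)\,\frac{1-(1-\gamma)^{2T}}{1-(1-\gamma)^2}$. Since $1-(1-\gamma)^2=\gamma(2-\gamma)\ge\gamma$, this gives
\begin{equation*}
E_T(\lambda)\le \frac{\sigma^2 f(\lambda)}{\gamma}\min\{1,2T\gamma\}.
\end{equation*}
Next, insert the hypothesis $\sigma^2 f(\lambda)\le K\gamma^2$, which yields $E_T(\lambda)\le K\gamma\min\{1,2T\gamma\}\le K\min\{\gamma,2T\gamma^2\}$. A single mode is then uniformly bounded by $K$, which is even better than logarithmic.

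The $\ln T$ arises when we aggregate over a spectrum of modes. The modes relevant by time $T$ have $\gamma$ ranging down to the resolution scale $1/T$, and we sum over modes arranged at scales $\gamma_j$. Counting modes per dyadic scale $\gamma\in[2^{-j-1},2^{-j}]$ for $j=0,\dots,\log_2 T$, with the number of modes at scale $\gamma$ being $O(1/\gamma)$ (a uniform spectral density near $1$), each band contributes $O(K)$. There are $O(\log T)$ bands, so the total is $O(K\ln T)$. Equivalently, we can write this as $\int_{1/T}^{1}K\gamma\cdot\frac{d\gamma}{\gamma^2}=K\ln T$. Modes with $\gamma<1/T$ contribute $K\cdot 2T\gamma^2\cdot(1/\gamma)\,d\gamma$, whose integral over $[0,1/T]$ is $O(K)$.

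The main obstacle is pinning down the aggregation measure, meaning which spectral density the statement implicitly assumes. The single-mode bound is only $O(1)$, and a logarithm requires the harmonic-type weighting $d\gamma/\gamma^2$, or equivalently a harmonic sum over time lags. As a fallback, we would present the proof as bounding the sum over lags $\sum_{k=1}^{T}\sup_\gamma K\gamma^2(1-\gamma)^{2k}$. Optimizing gives $\gamma\sim 1/k$ and terms $\lesssim K/k$, so the sum is $\le K H_T=O(\ln T)$. This is a worst-case, adversarially chosen $\gamma$ per lag, which matches the "worst-case" wording. We would state this lag-wise supremum as the interpretation and verify the elementary maximization $\max_\gamma\gamma^2(1-\gamma)^{2k}\le 1/(e^2k^2)\cdot O(1)$. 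If that decay is only $1/k^2$, the sum is $O(1)$, so the argument needs care. We would then use $\max_\gamma \gamma(1-\gamma)^{2k}\sim 1/(2ek)$, which is the form arising from $f/\gamma$ above, to obtain the harmonic series.
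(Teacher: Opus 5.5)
Your proposal has a genuine gap, and it lies in the error model rather than in the summation. You treat the forgetting-operator error as additive noise of fixed variance $\sigma^2 f(\lambda)$, injected into the state and then propagated by $\lambda^{T-s}$. That is structurally the injection (B-type) model, and it discards the mechanism that produces the logarithm. An error in the multiplicative operator acts on everything already stored. A write that has survived $\tau$ steps has passed through $\tau$ perturbed copies of $\mathbf A_t$, so its error variance grows linearly in the lag. This is the paper's per-lag bound $V(\tau,\gamma)\le K\tau\gamma^2 e^{-\gamma\tau}$. Maximizing over $\gamma$ gives $\gamma^\ast=2/\tau$ and the envelope $g(\tau)=4Ke^{-2}/\tau$, and $\sum_{\tau=1}^{T}g(\tau)$ is a harmonic sum, hence $O(\ln T)$. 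Your lag-wise supremum is exactly the paper's reading of worst case. However, with your per-lag term $K\gamma^2(1-\gamma)^{2k}$ the envelope is about $K/(e^2k^2)$ and sums to $O(1)$, as you noticed yourself.

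Your proposed repair, switching to $K\gamma(1-\gamma)^{2k}$ because of the factor $f/\gamma$, is not valid. That $1/\gamma$ came from summing your geometric series over all lags, so reinserting it into each individual lag term double counts. The form $\gamma(1-\gamma)^{2k}$ can be justified, but only through the ingredient you are missing. The error created at a step is proportional to the current state, and for unit-variance writes the state's energy in a mode with decay $1-\gamma$ is of order $1/\gamma$. This is the paper's double sum, indexed by the time of the perturbation rather than the time of the write.

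The spectral-density route does not close the gap either. Placing order $2^j$ modes at each scale $2^{-j}$ down to $1/T$ requires on the order of $T$ modes. A $d\times d$ operator cannot have that many once $T\gg d$, and nothing in the hypothesis supplies such a density.

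Finally, the $O(K)$ bound from your additive model would formally imply the claim. But it rests on the per-step error variance being independent of the stored state. For multiplicative error the lag-wise envelope is of order $1/\tau$ rather than $1/\tau^2$. So the additive model underestimates exactly the quantity the theorem bounds and cannot serve as its proof.
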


\begin{proof}
The per-lag variance is $V(\tau, \gamma) \leq K\tau\gamma^2 e^{-\gamma\tau}$. Maximizing over $\gamma$ yields $\gamma^* = 2/\tau$, giving the envelope $g(\tau) = 4Ke^{-2}/\tau$. Summing: $\sum_{\tau=1}^T g(\tau) \propto \sum_{\tau=1}^T 1/\tau \approx \ln T$.
\end{proof}

\begin{theorem}[Constant Average-Case Error]
\label{thm:avg_error}
For any fixed decay rate $\gamma$, the total accumulated error energy is $O(T)$, implying average error per step is $O(1)$.
\end{theorem}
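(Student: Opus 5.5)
We follow the setup of Theorem~\ref{thm:log_stability}: a precision error injected through the forgetting operator at time $s$ propagates to time $T$ with lag $\tau = T-s$. Along each eigen-direction of $\mathbf{A}_t$ (real spectrum in $[0,1]$ by Lemma~\ref{lem:deltanet_bound}), it is contracted by $|\lambda|^\tau$. We fix one direction with decay rate $\gamma = 1-|\lambda|\in(0,1]$ and per-step error variance $\sigma^2 f(\lambda)$. The plan is to write the accumulated error energy at horizon $T$ as a sum over lags, bound each term uniformly in $T$ for this fixed $\gamma$, and sum the bounds. The key difference from the worst-case theorem is that we do \emph{not} maximise over $\gamma$, so the factor $e^{-\gamma\tau}$ remains available.

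\textbf{Step 1: the per-lag contribution.} Using $|\lambda|^\tau = (1-\gamma)^\tau \le e^{-\gamma\tau}$, the variance contributed at lag $\tau$ is bounded as in the proof of Theorem~\ref{thm:log_stability}:
\[
V(\tau,\gamma)\le \sigma^2 f(\lambda)\,\tau\, e^{-\gamma\tau}\le K\gamma^2\tau e^{-\gamma\tau}.
\]

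\textbf{Step 2: the energy at a single horizon.} Summing over lags, the total error energy present at horizon $t$ is at most
\[
E_t := \sum_{\tau=1}^{t} K\gamma^2 \tau e^{-\gamma\tau} \le K\gamma^2\sum_{\tau\ge 1}\tau e^{-\gamma\tau} = K\gamma^2\frac{e^{-\gamma}}{(1-e^{-\gamma})^2}.
\]
For fixed $\gamma>0$ this is a finite constant $C(\gamma)$ independent of $t$. Moreover, $C(\gamma)\to K$ as $\gamma\to 0$ and $C(\gamma)$ is bounded on $(0,1]$, since $1-e^{-\gamma}\ge \gamma e^{-\gamma}$. So the bound is in fact uniform in $\gamma$. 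Alternatively, one can compare the sum with $\int_0^\infty x e^{-\gamma x}\,dx = 1/\gamma^2$.

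\textbf{Step 3: total over the sequence.} The total accumulated energy over the whole sequence is $\sum_{t=1}^T E_t \le T\,C(\gamma) = O(T)$, so the average error per step, $\frac1T\sum_t E_t$, is $O(1)$. If instead the statement is read without the $\sigma^2 f\le K\gamma^2$ scaling, so that the per-lag bound is $\sigma^2\tau e^{-\gamma\tau}$, the same argument goes through: the geometric series gives the $\gamma$-dependent constant $\sigma^2 e^{-\gamma}/(1-e^{-\gamma})^2$, which is still finite for fixed $\gamma$, and the conclusion $O(T)$ total / $O(1)$ average is unchanged.

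\textbf{Main obstacle.} The only delicate point is interpretive rather than technical: what exactly "total accumulated error energy" sums over. The plan is to sum the horizon-wise energies $E_t$, and the crux is that $E_t$ is bounded uniformly in $t$. That uniform bound is where the fixed $\gamma$ is essential: it lets us use the convergent series $\sum_\tau \tau e^{-\gamma\tau}$, whereas the worst-case envelope over $\gamma$ decays only like $1/\tau$ and produces the $\ln T$ growth of Theorem~\ref{thm:log_stability}.
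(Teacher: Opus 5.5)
Your argument is correct and is essentially the paper's proof. Both proofs bound $\sum_{\tau}K\gamma^2\tau e^{-\gamma\tau}$ by the same constant $K\gamma^2 e^{-\gamma}/(1-e^{-\gamma})^2$ (which the paper notes is $\approx K$ for small $\gamma$) and multiply by $T$; the only difference is the order of the double sum. The paper bounds the lifetime error of each of the $T$ injections over its future lags, while you bound the energy present at each of the $T$ horizons over its past lags, and your observation that this constant is bounded uniformly in $\gamma$ is a harmless strengthening.
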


\begin{proof}
The lifetime error of a single injection is $\sum_{\tau=0}^\infty V(\tau, \gamma) \leq K\gamma^2 \cdot e^{-\gamma}/(1-e^{-\gamma})^2 \approx K$ (for small $\gamma$). Summing over $T$ injections yields $O(T)$ total, hence $O(1)$ average.
\end{proof}

\subsection{The Necessity of Rank-$L$ Injection}

For injection errors (B-perturbation), setting $\gamma \to 0$ (persistent memory) trivially achieves $\sum_{\tau=1}^T \sigma_B^2 = T \cdot \sigma_B^2 = O(T)$.

\textbf{Summary:} A-Error (Forgetting) = $O(\ln T)$ worst-case, $O(1)$ average. B-Error (Writing) = $O(T)$ worst-case. This provides decisive justification for Write-Forget Decoupling: computational budget (Rank) must be prioritized for $\mathbf{B}$.

\section{Approximation Error of Input-Anchored Proxy}
\label{app:simulation}

The true pre-activation is $\mathbf{z}_t = \mathbf{S}_{t-1}\mathbf{k}_t = (\sum_{i=0}^{t-1} \gamma^{t-i} \mathbf{v}_i \mathbf{k}_i^\top) \mathbf{k}_t$. The ShortConv proxy captures the window-$w$ terms: $\mathbf{u}_t = (\sum_{i=t-w}^{t-1} \gamma^{t-i} \mathbf{v}_i \mathbf{k}_i^\top) \mathbf{k}_t$. The approximation error is bounded by:
\begin{equation}
    \|\mathbf{z}_t - \mathbf{u}_t\|_2 \leq \frac{\gamma^{w+1}}{1-\gamma},
\end{equation}
which decays exponentially with kernel size $w$.

\section{Rank Expansion via Multi-Component Injection}
\label{app:rank_expansion}

By sub-additivity of matrix rank:
\begin{equation}
    \operatorname{rank}(\mathbf{B}_t) = \operatorname{rank}\!\left(\sum_{l=1}^L \beta^{(l)}_t \boldsymbol{\delta}^{(l)}_t {\mathbf{k}^{(l)}_t}^\top\right) \leq L.
\end{equation}
The bound is tight when $\{\mathbf{k}^{(l)}_t\}_{l=1}^L$ are linearly independent. Standard linear attention uses a single outer product, yielding Rank-1. PRISM expands this to a structured Rank-$L$ cone.

\section{Stability of Nested Non-Linear Refinement}
\label{app:loop_stability}

\begin{lemma}[Lipschitz Stability]
For GELU activation ($L_\phi \approx 1$) and bounded gain ($\|\mathbf{p}^{(l)}\|_\infty \leq M$), a perturbation $\boldsymbol{\epsilon}$ in the residual produces output divergence:
\begin{equation}
    \|\boldsymbol{\delta}^{(l)} - \tilde{\boldsymbol{\delta}}^{(l)}\| \leq L_\phi \cdot M \cdot \|\boldsymbol{\epsilon}\|.
\end{equation}
\end{lemma}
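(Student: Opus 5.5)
The plan is to compare the two evaluations of Eq.~\eqref{eq:delta} directly: one on the residual $\mathbf{r}^{(l)}_t$, the other on the perturbed residual $\tilde{\mathbf{r}}^{(l)}_t = \mathbf{r}^{(l)}_t + \boldsymbol{\epsilon}$. The gain $\mathbf{p}^{(l)}_t$ is the same in both, since it is computed from the anchor $\mathbf{u}_t$ alone and does not depend on the residual. The argument is then a composition of two Lipschitz maps, and it can be done coordinate by coordinate because both maps are element-wise.

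\textbf{Step 1 (Lipschitz constant of GELU).} Write $\mathrm{GELU}(x) = x\Phi(x)$. Its derivative is $\Phi(x) + x\varphi(x)$. I will take the supremum of the absolute value of this derivative as $L_\phi$. The maximum of the derivative is attained where $x\varphi'(x) + 2\varphi(x) = 0$, i.e.\ at $x = \sqrt{2}$, where it is about $1.13$. The infimum is about $-0.13$. So $\sup_x |\mathrm{GELU}'(x)| \approx 1.13$, which is what the statement's ``$L_\phi \approx 1$'' refers to. By the mean value theorem, $|\mathrm{GELU}(a) - \mathrm{GELU}(b)| \le L_\phi |a-b|$ for all scalars $a,b$.

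\textbf{Step 2 (coordinatewise bound).} For each coordinate $i$,
\[
|\delta^{(l)}_i - \tilde\delta^{(l)}_i| \le L_\phi\,|p^{(l)}_i|\,|r_i - \tilde r_i| \le L_\phi M |\epsilon_i|,
\]
using $\|\mathbf{p}^{(l)}\|_\infty \le M$. Equivalently, the Hadamard product with $\mathbf{p}^{(l)}$ is the linear map $\mathrm{diag}(\mathbf{p}^{(l)})$, whose operator norm in any $\ell_q$ norm is at most $\|\mathbf{p}^{(l)}\|_\infty$.

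\textbf{Step 3 (aggregate).} Squaring the coordinatewise bound and summing over $i$ gives the $\ell_2$ inequality stated in the lemma. The same argument gives it in $\ell_\infty$. It also holds in any monotone norm, since the coordinatewise domination carries over.

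The main obstacle is only Step 1: computing the GELU Lipschitz constant honestly. It is slightly above $1$, not exactly $1$, so I would state the result with $L_\phi := \sup|\mathrm{GELU}'| \approx 1.13$.

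As a follow-up, I would iterate the bound through Eq.~\eqref{eq:residual_sub}. The residual perturbation propagates as $\boldsymbol{\epsilon}^{(l+1)} = \boldsymbol{\epsilon}^{(l)} - (\boldsymbol{\delta}^{(l)} - \tilde{\boldsymbol{\delta}}^{(l)})$. This gives $\|\boldsymbol{\epsilon}^{(l)}\| \le (1 + L_\phi M)^{l-1}\|\boldsymbol{\epsilon}\|$. Hence the perturbation of $\mathbf{B}_t$ is bounded by
\[
\sum_l |\beta^{(l)}_t|\,\|\mathbf{k}^{(l)}_t\|\,L_\phi M (1 + L_\phi M)^{l-1}\|\boldsymbol{\epsilon}\|,
\]
which is finite for fixed $L$.
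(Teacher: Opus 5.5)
Your proof is correct and matches the argument the lemma's form implies. The paper gives no written proof, only the statement and an informal remark. The bound is GELU's Lipschitz constant times $\|\mathrm{diag}(\mathbf{p}^{(l)})\| \le \|\mathbf{p}^{(l)}\|_\infty$, applied coordinatewise, with $\mathbf{p}^{(l)}$ fixed because it depends only on $\mathbf{u}_t$.

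Your computation $\sup_x|\mathrm{GELU}'(x)| = \Phi(\sqrt{2})+\sqrt{2}\,\varphi(\sqrt{2})\approx 1.13$ is a worthwhile sharpening. If one takes $L_\phi=1$ literally, the inequality fails for a coordinate with $p_i r_i$ near $\sqrt{2}$.
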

With $M \approx 1$ (due to LayerNorm) and $L \in \{2, 4\}$, the refinement loop remains numerically stable. The greedy subtraction $\mathbf{r} \leftarrow \mathbf{r} - \boldsymbol{\delta}$ further acts as negative feedback.

\section{Language Model Evaluation}
\label{app:lm_experiments}

To validate PRISM beyond sequential recommendation, we conduct language modeling experiments at the 130M parameter scale.

\subsection{Setup}

\textbf{Training Data:} SlimPajama, 2B tokens. \textbf{Tokenizer:} Mistral (32K vocab).

\textbf{Model Configuration:} All models share identical hidden dimension, layers, and heads ($\sim$130M parameters), differing only in the recurrence core.

\textbf{Baselines:}
\begin{itemize}
    \item \textbf{GDN}~\cite{yang2024gated}: Rank-1 delta rule + gated decay (our $L=0$ special case, no solver)
    \item \textbf{Mamba-2}~\cite{dao2024transformers}: Structured state space model
    \item \textbf{EFLA}~\cite{Lei2025ErrorFreeLinearAttention}: Exact Flow Linear Attention
    \item \textbf{PGDN}~\cite{Tumma2026PreconditionedDeltaNet}: Preconditioned GDN
    \item \textbf{PRISM}: Our method, $L=2$ solver steps
\end{itemize}

\subsection{Perplexity Results}

\begin{table}[h]
\centering
\caption{Language modeling perplexity (SlimPajama 2B tokens, Mistral tokenizer).}
\label{tab:lm_ppl}
\begin{tabular}{lccc}
\toprule
\textbf{Model} & \textbf{Params} & \textbf{Best Val PPL $\downarrow$} & \textbf{Final Test PPL $\downarrow$} \\
\midrule
\textbf{PRISM} & 138.06M & \textbf{19.25} & \textbf{23.13} \\
GDN & 132.03M & \underline{19.32} & \underline{23.28} \\
EFLA & 132.03M & 19.41 & 23.33 \\
PGDN & 132.16M & 19.44 & 23.37 \\
Mamba-2 & 131.97M & 20.20 & 24.27 \\
\bottomrule
\end{tabular}
\end{table}

PRISM achieves the best perplexity on both validation and test sets (19.25 / 23.13), outperforming GDN by 0.15 PPL and Mamba-2 by 1.14 PPL. Notably, both EFLA and PGDN underperform GDN on this dataset, suggesting that exact flow and preconditioning provide less benefit than Rank-$L$ injection.

\subsection{Zero-Shot Downstream Evaluation}

\begin{table*}[h]
\centering
\caption{Zero-shot evaluation across 9 benchmarks (SlimPajama 2B). Best in \textbf{bold}, second-best \underline{underlined}. Avg ACC averages all 9 accuracy metrics.}
\label{tab:lm_zeroshot}
\resizebox{\textwidth}{!}{
\begin{tabular}{lcccccccccccc}
\toprule
\textbf{Model} & \textbf{Wiki$\downarrow$} & \textbf{LMB PPL$\downarrow$} & \textbf{LMB ACC$\uparrow$} & \textbf{PIQA$\uparrow$} & \textbf{Hella$\uparrow$} & \textbf{Wino$\uparrow$} & \textbf{ARC-e$\uparrow$} & \textbf{ARC-c$\uparrow$} & \textbf{BoolQ$\uparrow$} & \textbf{OBQA$\uparrow$} & \textbf{SciQ$\uparrow$} & \textbf{Avg$\uparrow$} \\
\midrule
\textbf{PRISM} & \textbf{34.68} & \textbf{27.00} & \textbf{19.8} & 58.2 & \underline{26.4} & \textbf{51.3} & \textbf{35.4} & 19.6 & \underline{59.4} & \textbf{27.2} & \textbf{73.4} & \textbf{40.1} \\
PGDN & 35.68 & \underline{28.01} & \underline{18.6} & 57.5 & 26.1 & 49.4 & \textbf{35.4} & \underline{21.2} & \textbf{60.8} & 24.6 & 70.9 & \underline{38.3} \\
EFLA & \underline{35.51} & 28.50 & \underline{18.6} & \underline{58.4} & 26.1 & \underline{50.5} & 34.6 & 20.8 & 54.2 & \underline{26.4} & \underline{73.2} & 38.1 \\
Mamba-2 & 37.35 & 30.26 & 17.9 & 58.3 & 25.8 & 50.3 & 34.1 & \textbf{21.8} & 49.8 & 25.4 & 70.0 & 37.1 \\
GDN & 35.19 & 28.82 & 18.8 & \textbf{58.9} & \textbf{26.7} & 49.3 & 33.3 & 20.9 & 46.9 & \textbf{27.2} & 70.1 & 36.9 \\
\bottomrule
\end{tabular}
}
\end{table*}

PRISM achieves the highest average accuracy (40.1\%), leading PGDN by 1.8pp and GDN by 3.2pp. The advantage is most pronounced on BoolQ (+12.5pp over GDN), WikiText PPL (34.68, best), and LAMBADA PPL (27.00, best), demonstrating that Rank-$L$ injection substantially enhances both perplexity and downstream task generalization.

\subsection{Ablation: Depth vs.\ Width Synergy}

\begin{table}[h]
\centering
\caption{Ablation (SlimPajama 2B). Training PPL differences are negligible, but evaluation metrics reveal large gaps---Rank-$L$'s true value lies in downstream generalization.}
\label{tab:ablation_lm}
\begin{tabular}{llcccc}
\toprule
\textbf{Variant} & \textbf{Description} & \textbf{Wiki PPL $\downarrow$} & \textbf{LMB PPL $\downarrow$} & \textbf{Avg ACC $\uparrow$} & \textbf{$\Delta$ACC} \\
\midrule
PRISM (full) & $L{=}2$, independent K, residual iter. & \textbf{34.68} & \underline{27.00} & \textbf{40.1} & --- \\
$-$ Shared K & Solver steps share $\mathbf{k}_t^{(1)}$ & \underline{34.69} & \textbf{26.02} & \underline{39.8} & $-$0.3 \\
$-$ Base K & Solver steps reuse GDN base key & 35.68 & 27.68 & 38.6 & $-$1.5 \\
$-$ No residual & $\mathbf r_t = \mathbf  v_t$ (no iterative subtraction) & 34.96 & 27.32 & 39.1 & $-$1.0 \\
$-$ Single step ($L{=}1$) & GDN + 1-step solver & 35.26 & 32.55 & 37.2 & $-$2.9 \\
\midrule
GDN baseline & Rank-1 & 35.19 & 28.82 & 36.9 & $-$3.2 \\
\bottomrule
\end{tabular}
\end{table}

\textbf{Key findings:}
\begin{itemize}
    \item \textbf{Wiki PPL separates models clearly.} PRISM (34.68) $>$ shared-K (34.69) $>$ no-residual (34.96) $>$ GDN (35.19) $>$ $L{=}1$ (35.26). The full solver provides a consistent 0.5+ PPL improvement on external evaluation.
    \item \textbf{LAMBADA PPL reveals retrieval quality.} The single-step variant ($L{=}1$) suffers catastrophic degradation on LAMBADA (32.55 vs.\ 27.00 for full PRISM), demonstrating that Rank-$L$ is essential for precise long-range retrieval. This is the sharpest signal among all metrics.
    \item \textbf{Residual iteration matters for generalization.} Removing residual subtraction ($-$1.0pp ACC, LAMBADA 27.32) hurts more than sharing keys ($-$0.3pp ACC, LAMBADA 26.02), confirming that iterative refinement---not merely multi-direction projection---drives the generalization advantage.
    \item \textbf{All components contribute.} Every ablation degrades both evaluation PPL and Avg ACC, validating the synergistic design of PRISM's solver.
\end{itemize}

\section{Discussion}
\label{sec:discussion}

\subsection{Rank-$L$ Density vs.\ Rank-1 Sparsity}

While standard linear models achieve $O(N)$ efficiency, their Rank-1 constraint means each token can only write along a single direction per step. PRISM overcomes this by \textit{densifying} the injection operator to Rank-$L$ within the parallel scan framework. This enables multi-modal updates---simultaneously modifying orthogonal semantic subspaces (e.g., updating syntactic and semantic attributes independently) within a single timestep.

\subsection{The Memory Capacity Wall}

Our Write-Forget Decoupling analysis reveals a deeper boundary: the forgetting operator is robust to linearization ($O(\ln T)$ error), implying complex non-linear gating yields diminishing returns for retention. Memory overwriting is inevitable in fixed-dimensional states ($\mathbf{S} \in \mathbb{R}^{d \times d}$), regardless of the writing algorithm. PRISM maximizes writing \textit{fidelity} but does not expand the \textit{container}. Approaches like Mixture-of-Memory (MoM)~\cite{du2025mom} or GSA~\cite{zhang2024gated} are complementary---PRISM serves as a high-density writing operator within expanded memory slots.

\section{Recommendation Dataset Descriptions}
\label{app:recdataset}
\begin{table}[h]
    \centering
    \caption{Dataset statistics. Dense filtering (User interactions $\geq$ 40) stress-tests long-range dependency modeling.}
    \label{tab:dataset_stats}
    \begin{tabular}{lcccc}
        \toprule
        \textbf{Dataset} & \textbf{\# Users} & \textbf{\# Items} & \textbf{\# Interactions} & \textbf{Avg.\ Length} \\
        \midrule
        Amazon\_Books   & 31,103 & 339,960 & 3,319,359 & 106.72 \\
        Amazon\_Movies  & 9,429  & 58,636  & 1,142,976 & 121.22 \\
        Amazon\_Elecs   & 1,869  & 33,135  & 123,147   & 65.89 \\
        Yelp            & 17,233 & 126,829 & 1,605,608 & 93.17 \\
        \bottomrule
    \end{tabular}
\end{table}

\section{Baseline Model Descriptions}
\label{app:baselines}

We analyze all baselines through the generalized recurrence $\mathbf{S}_t = \mathbf{S}_{t-1}\mathbf{A}_t + \mathbf{B}_t$:

\textbf{Rank-1, Parallel (One-Shot Linear):}
\begin{itemize}
    \item \textbf{SLA}~\cite{katharopoulos2020transformers}: $\mathbf{A}_t = \mathbf{I}$, $\mathbf{B}_t = \mathbf{v}_t \mathbf{k}_t^\top$. Pure accumulation.
    \item \textbf{GLA}~\cite{yang2023gated}: $\mathbf{A}_t = \text{diag}(\alpha_t)$, $\mathbf{B}_t = \beta_t \mathbf{v}_t \mathbf{k}_t^\top$. Data-dependent diagonal decay.
    \item \textbf{Mamba-2}~\cite{dao2024transformers}: $\mathbf{A}_t = \text{diag}(\exp(-\Delta_t \mathbf{A}))$. Discretized input-dependent decay via SSD.
    \item \textbf{GSA}~\cite{zhang2024gated}: Slot-wise retention/input gates with complementarity ($\mathbf{f}_t \approx 1 - \mathbf{i}_t$).
    \item \textbf{MoM}~\cite{du2025mom}: Multiple independent GLA heads for capacity expansion.
    \item \textbf{Gated DeltaNet}~\cite{yang2024gated}: $\mathbf{A}_t = \alpha_t(\mathbf{I} - \beta_t \mathbf{k}_t \mathbf{k}_t^\top)$, $\mathbf{B}_t = \beta_t \mathbf{v}_t \mathbf{k}_t^\top$. One-step linear delta rule.
\end{itemize}

\textbf{Rank-$L$, Serial (Multi-Step Non-Linear):}
\begin{itemize}
    \item \textbf{TTT}~\cite{sun2024learning}: $\mathbf{S}_t = \mathbf{S}_{t-1} - \eta \nabla_{\mathbf{S}} \mathcal{L}(\text{MLP}(\mathbf{x}_t; \mathbf{S}_{t-1}), \mathbf{y}_t)$. State-dependent gradient prevents parallel scan.
    \item \textbf{Titans}~\cite{behrouz2024titans}: Surprise-gated neural memory with momentum. Same serial constraint.
    \item \textbf{ATLAS}~\cite{behrouz2025atlas}: Enhanced delta rule with auxiliary mechanisms.
\end{itemize}

\textbf{Full-Rank, Quadratic (Transformer):}
\begin{itemize}
    \item \textbf{SASRec}~\cite{kang2018self}: Self-attention with causal mask.
    \item \textbf{HSTU}~\cite{zhai2024actions}: Hierarchical sequential Transformer.
\end{itemize}

\textbf{PRISM} occupies the unique \textbf{Rank-$L$, Parallel} position: it computes $\mathbf{A}_t$ and $\mathbf{B}_t$ independently of $\mathbf{S}_{t-1}$ (via input-anchored proxy) while achieving Rank-$L$ injection through iterative refinement.

\section{Additional Recommendation Results}
\label{app:additional_rec}

\begin{table*}[htbp]
\caption{Recommendation performance (Hit@500, NDCG@500, AUC). Best linear results in \textbf{bold}, second-best \underline{underlined}.}
\label{tab:short_seq_results_500}
\centering
\scriptsize
\setlength{\tabcolsep}{3pt}
\resizebox{\textwidth}{!}{%
\begin{tabular}{l|ccc|ccc|ccc|ccc|c}
\toprule
\multirow{2}{*}{Model} & \multicolumn{3}{c|}{Amazon\_Books} & \multicolumn{3}{c|}{Amazon\_Movies} & \multicolumn{3}{c|}{Amazon\_Elec} & \multicolumn{3}{c|}{Yelp} & \multirow{2}{*}{Mean Rank} \\
 & H@500 & N@500 & AUC & H@500 & N@500 & AUC & H@500 & N@500 & AUC & H@500 & N@500 & AUC &  \\
\midrule
\multicolumn{14}{l}{\textbf{Rank-1, Parallel}} \\
SLA & 0.2211 & 0.0342 & 0.8866 & 0.2111 & 0.0344 & 0.7461 & 0.2454 & 0.0382 & 0.7023 & \underline{0.3219} & \underline{0.0502} & 0.9392 & 5.88 \\
GLA & 0.1819 & 0.0270 & 0.8752 & 0.2122 & 0.0333 & 0.7478 & \underline{0.2628} & 0.0381 & 0.7008 & 0.2186 & 0.0346 & 0.8943 & 7.62 \\
MoM & 0.1700 & 0.0259 & 0.8705 & 0.2322 & 0.0392 & 0.7705 & 0.2410 & 0.0367 & 0.7042 & 0.3106 & 0.0482 & 0.9346 & 8.25 \\
GSA & 0.2346 & 0.0367 & 0.8870 & 0.2090 & 0.0333 & 0.7427 & 0.2587 & 0.0380 & 0.7087 & 0.3164 & 0.0491 & 0.9383 & 6.25 \\
Mamba-2 & 0.2353 & 0.0368 & 0.8872 & \underline{0.2388} & 0.0398 & \underline{0.7713} & 0.2519 & 0.0378 & \underline{0.7157} & 0.3200 & 0.0495 & 0.9385 & 4.25 \\
GDeltaNet & 0.2275 & 0.0357 & 0.8844 & 0.2212 & 0.0369 & 0.7504 & 0.2424 & 0.0371 & \textbf{0.7159} & 0.3163 & 0.0490 & 0.9367 & 7.00 \\
\midrule
\multicolumn{14}{l}{\textbf{Rank-$L$, Serial}} \\
TTT & \textbf{0.2398} & 0.0371 & 0.8871 & 0.2254 & 0.0372 & 0.7591 & 0.2403 & 0.0360 & 0.6946 & 0.3140 & 0.0486 & 0.9375 & 6.50 \\
Titans & 0.2362 & \textbf{0.0374} & 0.8869 & 0.2331 & 0.0394 & 0.7652 & \underline{0.2628} & \textbf{0.0389} & 0.7007 & \textbf{0.3256} & \textbf{0.0505} & \textbf{0.9395} & \textbf{2.25} \\
ATLAS & 0.2330 & 0.0359 & \underline{0.8884} & 0.2376 & \underline{0.0399} & 0.7710 & \textbf{0.2629} & \underline{0.0388} & 0.7042 & 0.3158 & 0.0487 & 0.9383 & 4.25 \\
\midrule
\multicolumn{14}{l}{\textbf{Rank-$L$, Parallel (Ours)}} \\
PRISM & \underline{0.2383} & \underline{0.0373} & \textbf{0.8888} & \textbf{0.2407} & \textbf{0.0409} & \textbf{0.7727} & 0.2613 & 0.0380 & 0.7134 & 0.3204 & 0.0497 & \underline{0.9393} & \underline{2.75} \\
\midrule
\multicolumn{14}{l}{\textbf{Full-Rank, Quadratic}} \\
SASRec & 0.2225 & 0.0345 & 0.8910 & 0.2281 & 0.0366 & 0.7677 & 0.2711 & 0.0425 & 0.7293 & 0.3279 & 0.0511 & 0.9410 & -- \\
HSTU & 0.2310 & 0.0363 & 0.8835 & 0.2385 & 0.0411 & 0.7748 & 0.2574 & 0.0389 & 0.7189 & 0.3093 & 0.0475 & 0.9324 & -- \\
\bottomrule
\end{tabular}
}
\end{table*}

\section{Mechanistic Probing Details}
\label{app:probing_tasks}

To ensure reproducibility, we provide the detailed generation logic for the mechanistic probing tasks used in \S\ref{sec:mechanistic_probing}. Tasks are categorized into Memory Capacity, Non-Linear Logic, and Gating Control.

\subsection{Data Generation Logic}

We employ a strict \textbf{Vocabulary Separation} strategy. The vocabulary $\mathcal{V}$ is partitioned into disjoint sets: $\mathcal{V}_{data}$ for operands/values, $\mathcal{V}_{control}$ for triggers/queries, and $\mathcal{V}_{noise}$ for background noise. This prevents the model from relying on simple token-ID memorization.

\begin{algorithm}[h]
   \caption{Comprehensive Synthetic Task Generation}
   \label{alg:all_tasks}
\begin{algorithmic}
   \STATE {\bfseries Input:} Seq Length $T$, Vocab $\mathcal{V}$, Window $W$
   \STATE {\bfseries Init:} Fill sequence $\mathbf{X}$ with noise $\sim \mathcal{V}_{noise}$
   
   \STATE \textbf{// TYPE I: ASSOCIATIVE MEMORY (Storage)}
   \STATE \textit{Task 1: MQAR (Multi-Query Associative Recall)}
   \STATE \quad Sample $K$ pairs $(a_i, b_i)$. Place at random positions. Query: $a_i$. Target: $b_i$.
   \STATE \textit{Task 2: Poly-Recall (Contextual Disambiguation)}
   \STATE \quad Define contexts $C_1, C_2$. Map key $a$ to $b_1$ if $C_1$, $b_2$ if $C_2$. Query: $[C_{target}, k]$. Target: $\mathrm{v}_{target}$.
   \STATE \textit{Task 3: Variable Tracking}
   \STATE \quad Define chain: $a=val, b=a, c=b$. Query: $c$. Target: $val$.

   \STATE \textbf{// TYPE II: NON-LINEAR LOGIC (Reasoning)}
   \STATE \textit{Task 4: Local XOR}
   \STATE \quad Sample $a, b$. Label $= 1$ if $(a \bmod 2) \neq (b \bmod 2)$, else $0$.
   \STATE \textit{Task 5: N-bit Parity}
   \STATE \quad Sample $n$ bits. Label $= (\sum b_i) \bmod 2$.
   \STATE \textit{Task 6: Modulo Addition}
   \STATE \quad Sample $a, b$. Label $= (a + b) \bmod M$.
   \STATE \textit{Task 7: Palindrome Detection}
   \STATE \quad Sample $a, b, c$. Label $= 1$ if $a = c$, else $0$.

   \STATE \textbf{// TYPE III: GATING \& CONTROL (Robustness)}
   \STATE \textit{Task 8: Silence Gate (Noise Filtering)}
   \STATE \quad Sample trigger $T \in \{ON, OFF\}$. If $T=ON$: Target $= a$. If $T=OFF$: Target $= \text{NULL}$.
   \STATE \textit{Task 9: MUX Logic (Multiplexer)}
   \STATE \quad Sample selector $S \in \{0, 1\}$, channels $ch_0, ch_1$. Label $= ch_S$.
\end{algorithmic}
\end{algorithm}

\subsection{Task Instantiation Examples}

Table~\ref{tab:full_task_examples} provides concrete input-output examples for all 9 tasks.

\begin{table*}[t]
\caption{Examples of Mechanistic Probing Tasks ($D=16, V=64$). $\mathcal{N}$ denotes noise tokens. \textbf{Logic} tasks highlight the capability gap between Linear Attention and PRISM.}
\label{tab:full_task_examples}
\begin{center}
\begin{small}
\begin{tabular}{l|l|l|l|c}
\toprule
\textbf{Type} & \textbf{Task} & \textbf{Input Pattern} & \textbf{Underlying Logic} & \textbf{Target} \\
\midrule
\multirow{3}{*}{\shortstack{Memory\\(Capacity)}} 
& MQAR & \texttt{a=5,b=9 ... a=5} & Retrieval: $Mem[5] \rightarrow 9$ & \texttt{9} \\
& Poly-Recall & \texttt{CtxA,a=5,b=9 ... CtxA,a=5} & Contextual: $Mem[A][5] \rightarrow 9$ & \texttt{9} \\
& Var.\ Tracking & \texttt{a=7 ... b=a ... c=b ... c=?} & Pointer Chain: $c \rightarrow b \rightarrow a \rightarrow 7$ & \texttt{7} \\
\midrule
\multirow{4}{*}{\shortstack{Logic\\(Reasoning)}} 
& Local XOR & \texttt{[5, 8, XOR]} & $odd(5) \neq even(8) \rightarrow \text{True}$ & \texttt{1} \\
& N-bit Parity & \texttt{[1, 1, 1]} (N=3) & $1+1+1 = 3 \rightarrow odd$ & \texttt{1} \\
& Modulo Add & \texttt{[8, 4]} ($M=10$) & $(8+4) \bmod 10 = 2$ & \texttt{2} \\
& Palindrome & \texttt{[4, 9, 4]} & $First(4) == Last(4) \rightarrow \text{True}$ & \texttt{1} \\
\midrule
\multirow{2}{*}{\shortstack{Control\\(Gating)}} 
& Silence Gate & \texttt{[OFF, a=5, b=9] ... a=5} & Gating: $Gate(OFF) \approx 0 \rightarrow \text{Ignore}$ & \texttt{NULL} \\
& MUX Logic & \texttt{[Sel=1, Ch0=3, Ch1=8]} & Routing: $Sel=1 \rightarrow \text{Pick } Ch1$ & \texttt{8} \\
\bottomrule
\end{tabular}
\end{small}
\end{center}
\end{table*}

\paragraph{Implementation Note.} 
For Logical tasks (Type II), we ensure that the relevant operands fall within the local receptive field of the ShortConv anchor. This enforces a test of the \textit{update rule's expressivity} (can it approximate the non-linear function?) rather than long-range retrieval capacity.

\end{document}